\def\eqref#1{equation~\ref{#1}}
\def\1{\bm{1}}
\def\vf{{\bm{f}}}
\def\vw{{\bm{w}}}
\def\vx{{\bm{x}}}
\def\mI{{\bm{I}}}
\DeclareMathAlphabet{\mathsfit}{\encodingdefault}{\sfdefault}{m}{sl}
\SetMathAlphabet{\mathsfit}{bold}{\encodingdefault}{\sfdefault}{bx}{n}
\def\gL{{\mathcal{L}}}
\def\gN{{\mathcal{N}}}
\def\sE{{\mathbb{E}}}
\def\sR{{\mathbb{R}}}
\definecolor{mycitecolor}{RGB}{0,0,153}
\definecolor{mylinkcolor}{RGB}{179,0,0}
\definecolor{myurlcolor}{RGB}{255,0,0}
\definecolor{darkgreen}{RGB}{0,170,0}
\definecolor{darkred}{RGB}{200,0,0}
\theoremstyle{plain}
\newcommand{\circlednumber}[1]{%
  \tikz[baseline=(char.base)]{%
    \node[shape=circle,draw,inner sep=1pt] (char) {#1};%
  }%
}
\theoremstyle{plain}
\newtheorem{theorem}{Theorem}
\newtheorem{lemma}{Lemma}
\newtheorem{remark}{Remark}
\definecolor{brickred}{rgb}{0.9, 0.2, 0.2}
\definecolor{midnightblue}{rgb}{0.1, 0.1, 0.44}
\definecolor{oceanboatblue}{rgb}{0.0, 0.47, 0.75}
\newcommand{\redbf}[1]{\bf{\textcolor{brickred}{#1}}}
\newcommand{\obs}{\mathbf{x}^{\rm obs}}
\newcommand{\mis}{\mathbf{x}^{\rm mis}}
\newcommand{\modelname}{{\sc DiffPuter}\xspace}
\title{DiffPuter: Empowering Diffusion Models for Missing Data Imputation}
\author{Hengrui Zhang\textsuperscript{1} \quad  Liancheng Fang\textsuperscript{1} \quad Qitian Wu\textsuperscript{2} \quad Philip S. Yu\textsuperscript{1}\thanks{Corresponding author.} 
 \\
\textsuperscript{1}Computer Science Department, University of Illinois at Chicago \quad \\ \textsuperscript{2}Eric and Wendy Schmidt Center, Broad Institute of MIT and Harvard  \\
\texttt{\{hzhan55,lfang87,psyu\}@uic.edu}  \\
\texttt{wuqitian@broadinstitute.org}
}
\begin{document}

\maketitle

\begin{abstract}
Generative models play an important role in missing data imputation in that they aim to learn the joint distribution of full data. However, applying advanced deep generative models (such as Diffusion models) to missing data imputation is challenging due to 1) the inherent incompleteness of the training data and 2) the difficulty in performing conditional inference from unconditional generative models. To deal with these challenges, this paper introduces \modelname, a tailored diffusion model combined with the Expectation-Maximization (EM) algorithm for missing data imputation. \modelname iteratively trains a diffusion model to learn the joint distribution of missing and observed data and performs an accurate conditional sampling to update the missing values using a tailored reversed sampling strategy. Our theoretical analysis shows that \modelname's training step corresponds to the maximum likelihood estimation of data density (M-step), and its sampling step represents the Expected A Posteriori estimation of missing values (E-step). Extensive experiments across ten diverse datasets and comparisons with 17 different imputation methods demonstrate \modelname's superior performance. Notably, \modelname achieves an average improvement of 6.94\% in MAE and 4.78\% in RMSE compared to the most competitive existing method.
\end{abstract}

\section{Introduction}
In the field of data science and machine learning, missing data in tabular datasets is a common issue that can severely impair the performance of predictive models and the reliability of statistical analysis. Missing data can result from various factors, including data entry errors, non-responses in surveys, and system errors during data collection~\citep{missing-reason-1,missing-reason-2,missing-reason-3}. Properly handling missing data is essential, as improper treatment can lead to biased estimates, reduced statistical power, and invalid conclusions.

A plethora of work proposed over the past decades has propelled the development of missing data imputation research. Early classical methods often relied on partially observed statistical features to impute missing values or were based on conventional machine learning techniques, such as KNN~\citep{knn}, or simple parametric models, such as Bayesian models~\citep{bayesian} or Gaussian Mixture Models~\citep{em}. 
% These methods offer ample interpretability; however, they are constrained by capacity limitations, making it challenging to achieve satisfying performance. 
With the advent of deep learning, recent research has primarily focused on predictive~\citep{missforest, mice, miracle} or generative deep models~\citep{gain, miwae, mcflow} for missing data imputation. Predictive models learn to predict the target entries conditioned on other observed entries, guided by masking mechanisms~\citep{remasker} or graph regularization techniques~\citep{grape, igrm}. By contrast, generative methods learn the joint distribution of missing entries and observed entries and aim to impute the missing data via conditional sampling~\citep{mcflow, missvae, miwae, gain, missdiff, tabcsdi}. However, generative imputation methods still fall short compared to predictive methods even when employing state-of-the-art generative models~\citep{vae, normalizing_flows, gan, ddpm}. We think this is primarily due to the \textbf{incomplete likelihood} nature issue in missing data imputation: generative models need to estimate the joint distribution of missing data and observed data. However, since the missing data itself is unknown, there is an inherent error in the estimated data density. The classical Expectation-Maximization (EM) algorithm~\citep{em-early} offers an elegant route to handle this issue, being capable of addressing the incomplete likelihood issue by iteratively refining the values of the missing data.

Integrating EM algorithms with generative models has been extensively studied~\citep{mcflow}, however, its combination with Diffusion models~\citep{ddpm}, currently the most powerful generative models, is still unexplored. In the M-step, the diffusion models have been shown able to faithfully reconstruct the ground-truth distribution. However, in the E-step, it is usually considered challenging to utilize them to perform condition inference (e.g., predicting missing entries based on observed entries). This is because diffusion models directly model and generate the complete joint distribution of data across all dimensions simultaneously, lacking the flexibility found in VAE and GAN-based approaches~\citep{vaem, missing_ma,missing_ma2}. 

This paper introduces \modelname, a principled generative method for missing data imputation. \modelname explores a path that makes diffusion models compatible with the EM framework, allowing both E-step and M-step to be effective. Specifically: 1) In the M-step, \modelname employs a diffusion model to learn the joint distribution of the missing and observed data. With the powerful ability of diffusion models to learn tabular data distributions, in the M-step, \modelname learns a parameterized distribution that faithfully recovers the joint distribution of observed and missing entries. 2) In the E-step, \modelname uses the learned diffusion model to perform flexible and accurate conditional sampling by mixing the forward process for observed entries with the reverse process for missing entries.  Theoretically, we show that \modelname's M-step corresponds to the maximum likelihood estimation of the data density, while its E-step represents the \textit{Expected A Posteriori} (EAP) estimation of the missing values, conditioned on the observed values.

We conduct experiments\footnote{The code is available at \url{https://github.com/hengruizhang98/DiffPuter}.} on nine benchmark tabular datasets containing both continuous and discrete features under various missing data scenarios. We compare the performance of \modelname with 17 competitive imputation methods from different categories. Experimental results demonstrate the superior performance of \modelname across all settings and on almost all datasets.
% In addition, experiments demonstrate that \modelname's iterative training can effectively and gradually reduce the error in density estimation, thereby improving the performance of imputation. Ablation studies also demonstrate \modelname's robustness to different missing mechanisms. We have released the code to reproduce the results of the proposed \modelname and all the baseline methods.
\section{Related Works}
\paragraph{Iterative Methods for Missing Data Imputation.}
Iterative imputation is a widely used approach due to its ability to continuously refine predictions of missing data, resulting in more accurate imputation outcomes. This iterative process is especially crucial for methods requiring an initial estimation of the missing data. The Expectation-Maximization (EM) algorithm~\citep{em-early}, a classical method, can be employed for missing data imputation. However, earlier applications often assume simple data distributions, such as mixtures of Gaussians for continuous data or Bernoulli and multinominal densities for discrete data~\citep{em}. These assumptions limit the imputation capabilities of these methods due to the restricted density estimation of simple distributions. The integration of EM with deep generative models remains underexplored. A closely related approach is MCFlow~\citep{mcflow}, which iteratively imputes missing data using normalizing flows~\citep{normalizing_flows}. However, MCFlow focuses on recovering missing data through maximum likelihood rather than expectation, and its conditional imputation is achieved through soft regularization instead of precise sampling based on the conditional distribution. Beyond EM, the concept of iterative training is prevalent in state-of-the-art deep learning-based imputation methods. For instance, IGRM~\citep{igrm} constructs a graph from all dataset samples and introduces the concept of friend networks, which are iteratively updated during the imputation learning process. HyperImpute~\citep{hyperimpute} proposes an AutoML imputation method that iteratively refines both model selection and imputed values.

\paragraph{Diffusion Models for Missing Data Imputation.} 
We are not the first to utilize diffusion models for missing data imputation. In computer vision, diffusion models have been widely applied in image inpainting, either in the data space~\citep{repaint} or the latent space~\citep{latentpaint}. In the tabular data area, TabCSDI~\citep{tabcsdi} employs a conditional diffusion model to learn the distribution of masked observed entries conditioned on the unmasked observed entries. MissDiff~\citep{missdiff} uses a diffusion model to learn the density of tabular data with missing values by masking the observed entries. Although MissDiff was not originally intended for imputation, it can be easily adapted for this task. Other methods, despite claiming applicability for imputation, are trained on complete data and evaluated on incomplete testing data~\citep{tabsyn, forest_diff}. This approach contradicts the focus of this study, where the training data itself contains missing values. Additionally, all the aforementioned methods use one-step imputations, which overlook the issue that missing data in the training set can lead to inaccurate data density estimation. By contrast, the proposed \modelname is the first to integrate a diffusion-based generative model into the EM framework. Additionally, we achieved accurate conditional sampling by mixing the forward and reverse processes of diffusion and demonstrated the effectiveness of this approach through theoretical analysis.

\section{Preliminaries}

\subsection{Missing Value Imputation for Incomplete Data}

This paper addresses the missing value imputation task for incomplete data, where only partial data entries are observable during the training process. Formally, let the complete $d$-dimensional data be denoted as $\mathbf{x} \sim p_{\rm data}(\mathbf{x}) \in \sR^{d}$. For each data sample, $\mathbf{x}$, there is a binary mask $\mathbf{m} \in \{0,1\}^d$ indicating the location of missing entries for  $\mathbf{x}$. Let the subscript $k$ denote the $k$-th entry of a vector, then $m_{k} = 1$ stands for missing entries while $m_{k} = 0$ stands for observable entries. 

We further use $\obs$ and $\mis$ to denote the observed data and missing data, respectively (i.e., $\mathbf{x} = (\obs, \mis)$). Note that $\obs$ is the fixed ground-truth observation, while $\mis$ is conceptual and unknown, and we we aim to estimate it. The missing value imputation task aims to predict the missing entries $\mis$ based on the observed entries $\obs$.

% According to how the masks $\mathbf{m}$ are generated, there are three mainstream mechanisms of missingness, namely missing patterns: 1) Missing completely at random (MCAR) refers to the case that the probability of an entry being missing is independent of the data, i.e., $p(\mathbf{m} | \mathbf{x}) = p(\mathbf{m})$. 2) In missing at random (MAR), the probability of missingness depends only on the observed values, i.e., $p(\mathbf{m} | \mathbf{x}) = p(\mathbf{m} | \obs)$ 3) All other cases are classified as missing not at random (MNAR), where the probability of missingness might also depend on other missing entries.

\paragraph{In-sample vs. Out-of-sample imputation.} The missing data imputation task can be categorized into two types: in-sample and out-of-sample. In-sample imputation means that the model only has to impute the missing entries in the training set, while out-of-sample imputation requires the model's capacity to generalize to the unseen data records without fitting its parameters again.  Not all imputation methods can generalize to out-of-sample imputation tasks. For example, methods that directly treat the missing values as learnable parameters~\citep{mot, tdm} are hard to apply to unseen records.  A desirable imputation method is expected to perform well on both in-sample and out-of-sample imputation tasks, and this paper studies both of the two settings.

\subsection{Formulating Missing Data Imputation with Expectation-Maximization}
Treating $\obs$ as observed variables and $\mis$ as latent variables, with the estimated density of the complete data distribution parameterized as $p_{\bm{\theta}}(\mathbf{x})$, we can formulate the missing value imputation problem using the Expectation-Maximization (EM) algorithm. Specifically, when the complete data distribution $p_{\bm{\theta}}(\mathbf{x})$ is available, the optimal estimation of missing values is given by $ \mathbf{x}^{{\rm mis}*} = \sE_{\mis} p(\mis | \obs, \bm{\theta})$. Conversely, when the missing entries $\mis$ are known, the density parameters can be optimized via maximum likelihood estimation: $\bm{\theta}^* = \mathop{\arg \max}\limits_{\bm{\theta}} p(\mathbf{x}|\bm{\theta})$. %=p_{\bm{\theta}}(\obs, \mis)$.
Consequently, with the initial estimation of missing values $\mis$, the model parameters $\bm{\theta}$ and the missing value $\mis$ can be optimized by iteratively applying M-step and E-step:
\begin{itemize}
    \item \textbf{M}aximization-step: Fix $\mis$, update $\bm{\theta}^* = \mathop{\arg \max}\limits_{\bm{\theta}} p(\mathbf{x} | \bm{\theta}) =  \mathop{\arg \max}\limits_{\bm{\theta}}p_{\bm{\theta}}(\obs, \mis) $.
    \item \textbf{E}xpectation-step: Fix $\bm{\theta}$, update $\mathbf{x}^{\rm mis *} =  \sE_{\mis} p(\mis | \obs, \bm{\theta})$.
\end{itemize}

% \begin{remark}
%     The Expectation step can be equivalently rewritten as $\mathbf{x}^* =  \sE_{\mathbf{x}} \; p(\mathbf{x} | \obs, \bm{\theta})$, where $\mathbf{x}^* = (\mathbf{x}^{{\rm obs}},\mathbf{x}^{{\rm mis}*})$. In the remaining sections, we use $\mathbf{x}^*$ to denote the E-step for convenience.
% \end{remark}

\section{Methodology}
In this section, we introduce \modelname - Iterative Missing Data Im\underline{\textbf{put}}ation with \underline{\textbf{Diff}}usion. 
% Treating the missing entries $\mis$ as hidden variables, we  \modelname utilizes a diffusion model (parameterized by $\bm{\theta}$) to estimate the density of the data distribution $p_{\bm{\theta}}(\mathbf{x}) = p_{\bm{\theta}}(\obs, \mis)$. 
Based on the Expectation-Maximization (EM) algorithm, \modelname updates the density parameter $\bm{\theta}$ and hidden variables $\mis$ in an iterative manner. Fig. \ref{fig:main_fig} shows the overall architecture and training process of \modelname: 1) The M-step fixes the missing entries $\mis$, then a diffusion model is trained to estimate the density of the complete data distribution $p_{\bm{\theta}}(\mathbf{x}) = p_{\bm{\theta}}(\obs, \mis)$; 2) The E-step fixes the model parameters $\bm{\theta}$, then we update the missing entries $\mis$ via the reverse process of the learned diffusion model $p_{\bm{\theta}}(\mathbf{x})$. The above two steps are executed iteratively until convergence. The following sections introduce the M-step and E-step of \modelname, respectively. To avoid confusion, we use $\mathbf{x}, \mathbf{x}_t, \mathbf{x}^{\rm obs}$, etc. to denote samples from real data, $\tilde{\mathbf{x}}, \tilde{\mathbf{x}}_t, \tilde{\mathbf{x}}^{\rm obs}$, etc. to denote samples obtained by the model $\bm{\theta}$, while $\hat{\mathbf{x}}, \hat{\mathbf{x}}_t, \hat{\mathbf{x}}^{\rm obs} $, etc. to denote the specific values of variables.

\begin{figure}[t]
    \centering
    \includegraphics[width = 1.0\linewidth]{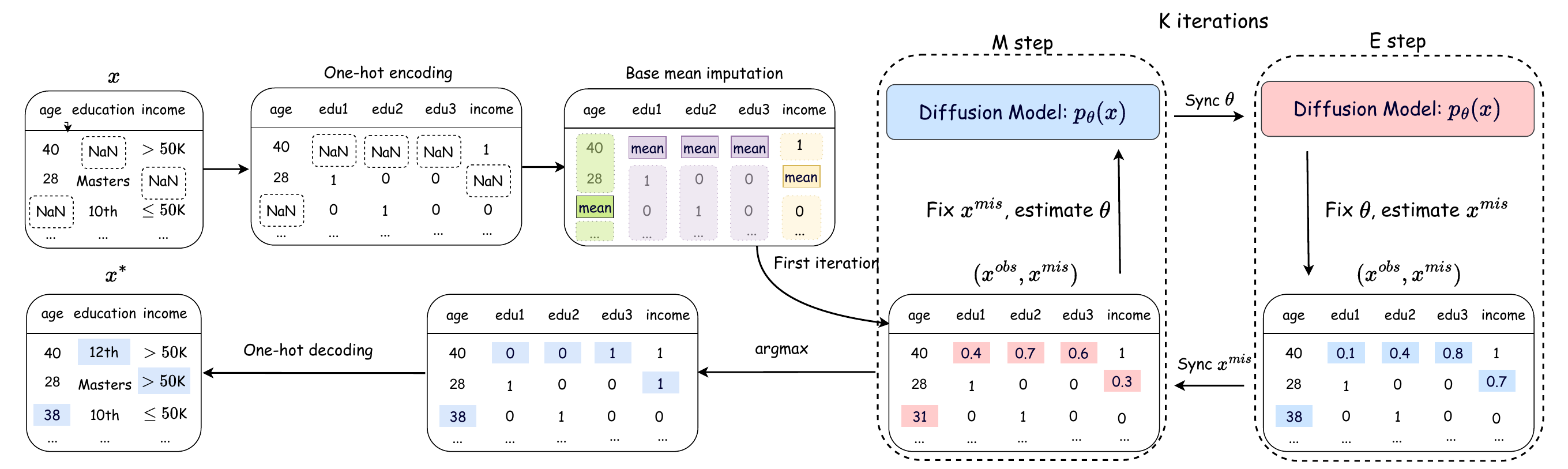} 
    \caption{An overview of the architecture of the proposed \modelname. \modelname utilizes one-hot encoding to transform discrete variables into continuous ones and use the mean of observed values to initialize the missing entries. 
    The EM algorithm alternates the process of 1) fixing $\mathbf{x}^{\rm mis}$ and estimate diffusion model parameter $\mathbf{\bm{\theta}}$, 2) fixing $\mathbf{\bm{\theta}}$ and estimate $\mathbf{x}^{\rm mis}$, for $K$ iterations. The final imputation result $\mathbf{x}^{*}$ is returned from the E-step of the last iteration.
    } 
    \label{fig:main_fig}
\end{figure}

\subsection{M-step: Density Estimation with Diffusion Models}\label{sec:m-step}
Given an estimation of complete data $\mathbf{x} = (\obs, \mis)$, M-step aims to learn the density of $\mathbf{x}$, parameterized by model $\bm{\theta}$, i.e., $p_{\bm{\theta}}(\mathbf{x})$. Inspired by the impressive generative modeling capacity of diffusion models~\citep{vp,edm}, \modelname learns $p_{\bm{\theta}}(\mathbf{x})$ through a diffusion process, which consists of a forward process that gradually adds Gaussian noises of increasing scales to $\mathbf{x}$, and a reverse process that recovers the clean data from the noisy one: 

\begin{align}
    \mathbf{x}_t & =  \mathbf{x}_0 + \sigma(t) \bm{\varepsilon}, \; \bm{\varepsilon} \sim \gN(\bm{0}, \mathbf{I}), & (\text{Forward Process})  \label{eqn:forward}\\
    {\rm d}\mathbf{x}_t & =  -2\dot{\sigma}(t) \sigma(t) \nabla_{\mathbf{x}_t} \log p(\mathbf{x}_t) {\rm d}t + \sqrt{2\dot{\sigma}(t) \sigma(t)} {\rm d} \bm{\omega}_t,  & (\text{Reverse Process}) \label{eqn:reverse}
\end{align}

% Denote the parameters of the diffusion model by $\bm{\theta}$, then the optimal $\bm{\theta}$ is learned via maximum likelihood estimation, i.e., $\bm{\theta}^* =  \mathop{\arg \max}\limits_{\bm{\theta}}p(\bm{\theta} | \obs, \mis)$.

% To learn the underlying data distribution, \modelname first standardized the data $\mathbf{x}$ such that it has zero-mean and unit-variance, i.e., $\mu(\mathbf{x}) = 0, \sigma({\mathbf{x}}) = 1$. Then, \modelname adopts the following forward diffusion process and reverse sampling process:

In the forward process, $\mathbf{x}_0 = \mathbf{x} = (\obs, \mis)$ is the currently estimated data at time $0$, and $\mathbf{x}_t$ is the diffused data at time $t$. $\sigma(t) = t$ is the noise level (and $\dot{\sigma}(t)$ is its derivative w.r.t. $t$), i.e., the standard deviation of Gaussian noise, at time $t$. The forward process has defined a series of data distribution $p(\mathbf{x}_t) = \int_{\mathbf{x}_0} p(\mathbf{x}_t | \mathbf{x}_0) p(\mathbf{x}_0) {\rm d}\mathbf{x}_0$, and $p(\mathbf{x}_0) = p(\mathbf{x})$. Note that when restricting the mean of $\mathbf{x}_0$ to $0$ and keeping the variance of $\mathbf{x}_0$ small (e.g., via standardization), $p(\mathbf{x}_t)$ approaches a tractable prior distribution $\pi(\mathbf{x})$ at $t = T$ when $\sigma(T)$ is large enough, meaning $p(\mathbf{x}_T) \approx \pi(\mathbf{x})$~\citep{mle_diff}. In our formulation in Eq.~\ref{eqn:forward}, $p(\mathbf{x}_T) \approx \pi(\mathbf{x}) = \gN(\bm{0}, \sigma^2(T)\mathbf{I})$.
 
In the reverse process, $\nabla_{\mathbf{x}_t} \log p(\mathbf{x}_t)$ is the gradient of $\mathbf{x}_t$'s log-probability w.r.t., to $\mathbf{x}_t$, and is also known as the \textit{score function}. $\bm{\omega}_t$ is the standard Wiener process. The model is trained by (conditional) score matching~\citep{vp}, which utilizes a neural network $\bm{\epsilon}_{\bm{\theta}}(\mathbf{x}_t, t)$ (called denoising/score network) to approximate the conditional score-function $\nabla_{\mathbf{x}_t} \log p(\mathbf{x}_t | \mathbf{x}_0)$, which in expectation approximates $\nabla_{\mathbf{x}_t} \log p(\mathbf{x}_t)$:

\begin{equation}\label{eqn:score-matching}
    \gL_{\rm SM} = \sE_{\mathbf{x}_0 \sim p(\mathbf{x}_0)} \sE_{t \sim p(t)} \sE_{\bm{\varepsilon}\sim \gN(\bm{0}, \mathbf{I})} \Vert \bm{\epsilon}_{\bm{\theta}}(\mathbf{x}_t, t) - \nabla_{\mathbf{x}_t} \log p(\mathbf{x}_t|\mathbf{x}_0) \Vert_2^2, \;~\text{where}~ \mathbf{x}_t = \mathbf{x}_0 + \sigma(t)\bm{\varepsilon}.
\end{equation}

Since the score of conditional distribution has analytical solutions, i.e., $\nabla_{\mathbf{x}_t} \log p(\mathbf{x}_t | \mathbf{x}_0) = \frac{\nabla_{\mathbf{x}_t} p(\mathbf{x}_t | \mathbf{x}_0)}{ p(\mathbf{x}_t | \mathbf{x}_0)} = -\frac{\mathbf{x}_t - \mathbf{x}_0}{\sigma^2(t)} = -\frac{\bm{\varepsilon}}{\sigma(t)}$, Eq.~\ref{eqn:score-matching} can be interpreted as training a neural network $\bm{\epsilon}_{\bm{\theta}}(\mathbf{x}_t, t)$ to approximate the scaled noise. Therefore, $\bm{\epsilon}_{\bm{\theta}} (\mathbf{x}_t, t)$ is also known as the denoising function, and in this paper, it is implemented as a five-layer MLP (see Appendix~\ref{appendix:hyperparameters}).
\begin{remark}
    Starting from the prior distribution $\mathbf{x}_T \sim \pi(\mathbf{x})$, and apply the reverse process in Eq.~\ref{eqn:reverse} with $\nabla_{\mathbf{x}_t} \log p(\mathbf{x}_t)$ replaced with $\bm{\epsilon}_{\bm{\theta}}(\mathbf{x}_t, t)$, we obtain a series of distributions $p_{\bm{\theta}}(\mathbf{x}_t), \forall t \in [0, T]$
\end{remark}

\begin{remark}\label{remark:score-matching-mle}
{\rm (Corollary 1 in~\citep{mle_diff})}
    Let $p(\mathbf{x}) = p(\mathbf{x}_0)$ be the data distribution, and $p_{\bm{\theta}}(\mathbf{x}) = p_{\bm{\theta}}(\mathbf{x}_0)$ be the marginal data distribution obtained from the reverse process, then the score-matching loss function in Eq.~\ref{eqn:score-matching} is an upper bound of the negative-log-likelihood of real data $\vx\sim p(\vx)$ over $\bm{\theta}$. Formally,
    \begin{equation}
        -\sE_{p(\mathbf{x})} [\log p_{\bm{\theta}}(\mathbf{x})] \le \gL_{\rm SM}(\bm{\theta}) + Const,
    \end{equation}
    where $Const$ is a constant independent of $\bm{\theta}$.
\end{remark}
Remark~\ref{remark:score-matching-mle} indicates that the $\bm{\theta}$ optimized by Eq.~\ref{eqn:score-matching} approximates the maximum likelihood estimation of data distribution $p(\mathbf{x})$. Consequently, $p_{\bm{\theta}}(\vx)$ approximates $p(\mathbf{x})$ accurately with enough capacity.
The algorithmic illustration of \modelname's M-step is summarized in Algorithm~\ref{alg:diffusion}.

\subsection{E-Step: Missing Data Imputation with a Learned Diffusion Model}
Given the current estimation of data distribution $p_{\bm{\theta}}(\mathbf{x})$, the E-step aims to obtain the distribution of complete data, conditional on the observed values, i.e., $p_{\bm{\theta}}(\mathbf{x}| \obs)$, such that the estimated complete data $\mathbf{x}^*$ can be updated by taking the expectation, i.e., $\mathbf{x}^{*} = \sE_{\mathbf{x}}\; p_{\bm{\theta}}(\mathbf{x} | \obs)$. 

When there is an explicit density function for $p_{\bm{\theta}}(\mathbf{x})$, or when the conditional distribution $p_{\bm{\theta}}(\mathbf{x} | \obs)$ is tractable (e.g., can be sampled), computing $\sE_{\mathbf{x}}\; p(\mathbf{x} | \obs, \bm{\theta})$ becomes feasible. While most deep generative models such as VAEs and GANs support convenient unconditional sampling from $p_{\bm{\theta}}(\mathbf{x})$, they do not naturally support conditional sampling, e.g., $\tilde{\mathbf{x}} \sim p_{\bm{\theta}}(\mathbf{x} | \obs)$. Luckily, since the diffusion model preserves the size and location of features in both the forward diffusion process and the reverse denoising process, it offers a convenient and accurate solution to perform conditional sampling $p_{\bm{\theta}}(\mathbf{x} | \obs)$ from an unconditional model $p_{\bm{\theta}}(\mathbf{x}) = p_{\bm{\theta}}(\obs, \mis)$.

Specifically, let ${\mathbf{x}}$ be the data to impute, $\mathbf{x}^{\rm obs} = \hat{\mathbf{x}}_0^{\rm obs}$ be the values of observed entries, $\mathbf{m}$ be the $0/1$ indicators of the location of missing entries, and $ \tilde{\mathbf{x}}_{t}$ be the imputed data at time $t$. Then, we can obtain the imputed data at time $t - \Delta t$ via combining the observed entries from the forward process on ${\mathbf{x}}_0 = \mathbf{x}$, and the missing entries from the reverse process on $\tilde{\mathbf{x}}_{t}$ from the prior step $t$~\citep{repaint, vp}:
\begin{align}
    \mathbf{x}^{\rm foward}_{t - \Delta t} & = {\mathbf{x}} + \sigma(t - \Delta t) \cdot \bm{\varepsilon}, \; \mbox{where} \; \bm{\varepsilon} \sim \gN(\bm{0}, \mathbf{I}), \label{eqn:impute-obs}\;  & (\text{Forward for observed entries})\\
    \mathbf{x}^{\rm reverse}_{t - \Delta t} & = \tilde{\mathbf{x}}_{t} + \int_{t}^{t - \Delta t} {\rm d} \tilde{\mathbf{x}}_t, \; \mbox{where} \; {\rm d}\tilde{\mathbf{x}}_t \; \mbox{is defined in Eq.~\ref{eqn:reverse}} \;  & (\text{Reverse for missing entries}) \label{eqn:impute-mis}\\
    \tilde{\mathbf{x}}_{t - \Delta t} & = (1-\mathbf{m}) \odot \mathbf{x}^{\rm forward}_{t - \Delta t} + \mathbf{m} \odot \mathbf{x}^{\rm reverse}_{t - \Delta t}. \;  & (\text{Merging}) \label{eqn:impute-combine}
\end{align}
Based on the above process, starting at a random noise from the maximum time $T$, i.e., $\tilde{\mathbf{x}}_{T} \sim \gN(\bm{0}, \sigma^2(T) \mathbf{I})$, we can obtain a reconstructed $\tilde{\mathbf{x}}_0$, such that the observed entries of $\tilde{\mathbf{x}}_0$ are the same as those of $\mathbf{x}_0$, i.e., $\tilde{\mathbf{x}}^{\rm obs}_0 = \hat{\mathbf{x}}^{\rm obs}_0$. In Theorem~\ref{theorem:impute}, we prove that via the algorithm above, the obtained $\tilde{\mathbf{x}}_0$ is sampled from the desired conditional distribution, i.e., $\tilde{\mathbf{x}}_0 \sim p_{\bm{\theta}}(\mathbf{x}| \mathbf{x}^{\rm obs} = \hat{\mathbf{x}}_0^{\rm obs})$.
\begin{theorem}\label{theorem:impute}
    Let $\tilde{\mathbf{x}}_{T}$ be a sample from the prior distribution $\pi(\mathbf{x}) = \gN(\mathbf{0}, \sigma^2(T)\mathbf{I})$, $\mathbf{x}$ be the data to impute, and the known entries of $\mathbf{x}$ are denoted by $\mathbf{x}^{\rm obs} = \hat{\mathbf{x}}_0^{\rm obs}$. The score function $\nabla_{\mathbf{x}_t}\log p(\mathbf{x}_t)$ is approximated by neural network $\bm{\epsilon}_{\bm{\theta}}(\mathbf{x}_t, t)$ . Applying Eq.~\ref{eqn:impute-obs}, Eq.~\ref{eqn:impute-mis}, and Eq.~\ref{eqn:impute-combine} iteratively from $t = T \gg 0$ until $t = 0$ with $\Delta t \rightarrow 0$, then $\tilde{\mathbf{x}}_0$ is a sample from $p_{\bm{\theta}}(\mathbf{x})$, under the condition that its observed entries $\tilde{\mathbf{x}}_0^{\rm obs} = \hat{\mathbf{x}}_0^{\rm obs}$.
    Formally,
    \begin{equation}
        \tilde{\mathbf{x}}_0 \sim p_{\bm{\theta}}(\mathbf{x} |\mathbf{x}^{\rm obs} = \hat{\mathbf{x}}_0^{\rm obs})
    \end{equation}
\end{theorem}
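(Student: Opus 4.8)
I would prove the statement by a backward induction along the reverse-time grid $t=T,\,T-\Delta t,\,\dots,\,0$, carrying the invariant that the iterate $\tilde{\mathbf{x}}_t$ produced by Eqs.~\ref{eqn:impute-obs}--\ref{eqn:impute-combine} has law $\bar p_t(\mathbf{x}_t):=p_{\bm\theta}(\mathbf{x}_t\mid\mathbf{x}_0^{\rm obs}=\hat{\mathbf{x}}_0^{\rm obs})$ --- the distribution of the forward-diffused complete data at time $t$ when the \emph{clean} observed block is pinned to $\hat{\mathbf{x}}_0^{\rm obs}$. The base case is the same large-noise argument quoted in Section~\ref{sec:m-step}: for $\sigma(T)$ large, $\bar p_T\approx\pi(\mathbf{x})=\gN(\bm0,\sigma^2(T)\mathbf{I})$, which is exactly the law of the initialization $\tilde{\mathbf{x}}_T$. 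At the other end, $\bar p_0=p_{\bm\theta}(\mathbf{x}\mid\mathbf{x}^{\rm obs}=\hat{\mathbf{x}}_0^{\rm obs})$ is supported on $\{\mathbf{x}^{\rm obs}=\hat{\mathbf{x}}_0^{\rm obs}\}$, so the invariant at $t=0$ is precisely the assertion of the theorem.

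The key structural fact making the induction go through is that $\bar p_t$ \emph{factorizes across the observed/missing split}. Because Eq.~\ref{eqn:forward} injects noise coordinatewise and independently, once we condition on $\mathbf{x}_0^{\rm obs}=\hat{\mathbf{x}}_0^{\rm obs}$ the time-$t$ observed block equals $\hat{\mathbf{x}}_0^{\rm obs}+\sigma(t)\bm{\varepsilon}^{\rm obs}$ with fresh noise $\bm{\varepsilon}^{\rm obs}$, hence is independent of the time-$t$ missing block, and
\begin{equation}
    \bar p_t(\mathbf{x}_t^{\rm obs},\mathbf{x}_t^{\rm mis})=\gN\!\big(\mathbf{x}_t^{\rm obs};\hat{\mathbf{x}}_0^{\rm obs},\sigma^2(t)\mathbf{I}\big)\cdot \bar p_t^{\rm mis}(\mathbf{x}_t^{\rm mis}),
\end{equation}
where $\bar p_t^{\rm mis}$ is the $\sigma(t)$-level forward diffusion of $p_{\bm\theta}(\mathbf{x}_0^{\rm mis}\mid\hat{\mathbf{x}}_0^{\rm obs})$. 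Running the reverse SDE of Eq.~\ref{eqn:reverse} with $p(\mathbf{x}_t)$ replaced by this $\bar p_t$, the score $\nabla_{\mathbf{x}_t}\log\bar p_t$ splits blockwise and the exact conditional reverse dynamics decouple: on the observed block it is the time-reversal of pure noising of the point $\hat{\mathbf{x}}_0^{\rm obs}$, with marginal $\gN(\hat{\mathbf{x}}_0^{\rm obs},\sigma^2(t)\mathbf{I})$ at every $t$ --- reproduced verbatim by the resampling in Eq.~\ref{eqn:impute-obs} together with the merge in Eq.~\ref{eqn:impute-combine} --- and on the missing block it is the reverse SDE driven by $\nabla_{\mathbf{x}_t^{\rm mis}}\log\bar p_t^{\rm mis}$.

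Closing the induction then amounts to checking that the missing-block update of Eqs.~\ref{eqn:impute-mis}--\ref{eqn:impute-combine} realizes that last reverse SDE. The drift acting on the missing coordinates in Eq.~\ref{eqn:impute-mis} (inherited from Eq.~\ref{eqn:reverse}) is proportional to $\nabla_{\mathbf{x}_t^{\rm mis}}\log p(\mathbf{x}_t^{\rm obs},\mathbf{x}_t^{\rm mis})=\nabla_{\mathbf{x}_t^{\rm mis}}\log p(\mathbf{x}_t^{\rm mis}\mid\mathbf{x}_t^{\rm obs})$, evaluated at the current observed iterate $\tilde{\mathbf{x}}_t^{\rm obs}$. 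I expect the single delicate point of the whole proof to be the identity
\begin{equation}
    \nabla_{\mathbf{x}_t^{\rm mis}}\log p\big(\mathbf{x}_t^{\rm mis}\mid\mathbf{x}_t^{\rm obs}=\tilde{\mathbf{x}}_t^{\rm obs}\big)=\nabla_{\mathbf{x}_t^{\rm mis}}\log\bar p_t^{\rm mis}(\mathbf{x}_t^{\rm mis}),
\end{equation}
i.e.\ that conditioning on the \emph{noised} observed iterate is, for the purpose of the missing-block gradient, interchangeable with conditioning on the clean $\hat{\mathbf{x}}_0^{\rm obs}$. I would discharge this either by the replacement-sampling argument of RePaint~\cite{repaint} --- re-noising and re-pinning the observed block to $\hat{\mathbf{x}}_0^{\rm obs}$ at every step drives the trajectory, as $\Delta t\to0$, onto the set where $\mathbf{x}_t^{\rm obs}$ is informationally equivalent to $\hat{\mathbf{x}}_0^{\rm obs}$ --- or, under the idealization $\bm\epsilon_{\bm\theta}=\nabla\log p$ of Remark~\ref{remark:score-matching-mle} together with the inductive hypothesis $\tilde{\mathbf{x}}_t^{\rm obs}\sim\gN(\hat{\mathbf{x}}_0^{\rm obs},\sigma^2(t)\mathbf{I})$, by noting the identity holds in the averaged sense needed to transport laws. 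With it in hand, the updated $\tilde{\mathbf{x}}_{t-\Delta t}^{\rm mis}\sim\bar p_{t-\Delta t}^{\rm mis}$ while independently $\tilde{\mathbf{x}}_{t-\Delta t}^{\rm obs}\sim\gN(\hat{\mathbf{x}}_0^{\rm obs},\sigma^2(t-\Delta t)\mathbf{I})$; by the factorization above their joint law is $\bar p_{t-\Delta t}$, which closes the induction. Letting $t,\Delta t\to0$ yields $\tilde{\mathbf{x}}_0\sim\bar p_0=p_{\bm\theta}(\mathbf{x}\mid\mathbf{x}^{\rm obs}=\hat{\mathbf{x}}_0^{\rm obs})$, as claimed.
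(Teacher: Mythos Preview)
Your proposal is correct and follows essentially the same route as the paper's proof: both argue by backward induction on $t$ with the invariant $\tilde{\mathbf{x}}_t\sim p_{\bm\theta}(\mathbf{x}_t\mid\mathbf{x}^{\rm obs}=\hat{\mathbf{x}}_0^{\rm obs})$, use the large-noise approximation $p(\mathbf{x}_T\mid\mathbf{x}_0)\approx\pi$ for the base case, exploit the coordinatewise-independent noise to factorize the conditional law across the observed/missing split, and handle the observed block by direct resampling while treating the missing block via the unconditional reverse step plus a small-$\Delta t$ approximation. The only cosmetic difference is that the paper packages the inductive step as a separate lemma and justifies the key approximation at the transition-kernel level (``$\mathbf{x}_{t-\Delta t}^{\rm mis}$ is almost predictable from $\mathbf{x}_t$ alone as $\Delta t\to0$'') rather than at the score level as you do; both are the same RePaint-style heuristic.
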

See proof in Appendix~\ref{appendix:proof-theorem-impute}. Theorem~\ref{theorem:impute} demonstrates that with a learned diffusion model $\bm{\theta}$, we are able to obtain samples exactly from the conditional distribution $p_{\bm{\theta}} (\mathbf{\vx} | \mathbf{x}^{\rm obs})$ through the aforementioned imputation process. For inference, we use Monte Carlo estimation to compute the expectation of the missing values, i.e., 
$\mathbf{x}^{*} = \sE_{\mathbf{x}} p_{\bm{\theta}} (\mathbf{x}|\mathbf{x}^{\rm obs}) \approx \sum_{j=1}^N \tilde{\mathbf{x}}_0^{(j)} / N $.

\paragraph{Discretization.} The above imputing process involves recovering $\tilde{\mathbf{x}}_t$ continuously from $t = T$ to $t = 0$, which is infeasible in practice. In implementation, we discretize the process via $M + 1$ discrete descending timesteps $t_M, t_{M-1}, \cdots, t_{i}, \cdots, t_0$, where $t_M = T, t_0 = 0$. Therefore, starting from $\tilde{\mathbf{x}}_{t_M} \sim p_{\bm{\theta}}(\mathbf{x}_{t_M} | \obs = \hat{\mathbf{x}}_0^{\rm obs})$, we obtain $p_{\bm{\theta}}(\mathbf{x}_{t_i} | \obs = \hat{\mathbf{x}}_0^{\rm obs})$ from $i=M-1$ to $i = 0$. 

\begin{minipage}[t]{0.38\textwidth}
\begin{algorithm}[H]
    \caption{M-step: Density Estimation using Diffusion Model}
    \label{alg:diffusion}
    \KwIn{Data samples from $p(\mathbf{x})$. 
    }
    
    \KwOut{Score network $\bm{\epsilon}_{\bm{\theta}}$}
    \BlankLine

    % \# $\bm{\varepsilon} \sim \gN(\bm{0}, \mathbf{I}) \in \sR^{d} $ \\
    \While{not converging}
    {   
        Sample $\mathbf{x} \sim p(\mathbf{x})\in \sR^{d}$   \\
        Sample $t \sim p(t)$   \\
        Sample $\bm{\varepsilon} \sim \gN(\bm{0}, \mathbf{I}) \in \sR^{d} $ \\
        % $\bm{\varepsilon}_t = \sigma(t) \cdot \bm{\varepsilon}$   \\
        $\mathbf{x}_0 = \mathbf{x}$   \\
        $\mathbf{x}_t = \mathbf{x}_0 + \sigma(t) \cdot \bm{\varepsilon} $   \\
        $\ell(\bm{\theta}) = \Vert \bm{\epsilon}_{\bm{\theta}}(\mathbf{x}_t, t) - \frac{ - \bm{\varepsilon}}{\sigma(t)} \Vert_2^2$ \\
        Update $\bm{\theta}$ via Adam optimizer \\
    }
\end{algorithm}
\end{minipage}
\hfill
\begin{minipage}[t]{0.60\textwidth}
\vspace{0pt}
\begin{algorithm}[H]
    \caption{E-step: Missing Data Imputation}
    \label{alg:impute}
    \KwIn{Score network $\bm{\epsilon}_{\bm{\theta}}(\mathbf{x}_t, t)$. Data with missing values ${\mathbf{x}} \in \sR^{d}$, mask $\mathbf{m} \in \{0,1\}^{d}$.  Number of samples $N$. Number of sampling steps $M$.
    }
    
    \KwOut{Imputed data $\mathbf{x}^{*}$}
    \BlankLine
    \For{$j \gets$ $1$ \KwTo $N$}{
    Sample $\tilde{\mathbf{x}}_{t_M}^{(j)} \sim \gN(\bm{0}, \sigma^2(t_M)\mathbf{I})$ \\
        \For{$i \gets$ $M$ \KwTo $1$}{
                $\mathbf{x}_{t_{i-1}}^{{\rm forward}, (j)} = \mathbf{x} + \sigma(t_{i-1}) \cdot \bm{\varepsilon}$ \\
                $\mathbf{x}_{t_{i-1}}^{{\rm reverse}, (j)}  = \tilde{\mathbf{x}}_{t_i}^{(j)} + \int_{t_i}^{t_{i-1}} {\rm d}\tilde{\mathbf{x}}_{t_i}^{(j)}$ \\
                $ \tilde{\mathbf{x}}_{t_{i-1}}^{(j)} = \mathbf{m} \odot \mathbf{x}_{t_{i-1}}^{{\rm forward},(j)} + (1-\mathbf{m}) \odot \mathbf{x}_{t_{i-1}}^{{\rm reverse}, (j)}$
                \\
        }
    }
    $\mathbf{x}^{*} = \sum_{j=1}^N \tilde{\mathbf{x}}_{t_0}^{(j)} / N  = \sum_{j=1}^N \tilde{\mathbf{x}}_{0}^{(j)} / N$\\
\end{algorithm}
\end{minipage}

Since the desired imputed data $\mathbf{x}^{{\rm mis} * }$ is the expectation, i.e., $\mathbf{x}^{{\rm mis} * } = \sE_{\mis}p(\mis | \obs = \hat{\mathbf{x}}_0^{\rm obs})$, we sample $\tilde{\mathbf{x}}_0^{\rm mis}$ for $N$ times, and take the average value as the imputed $\mathbf{x}^{{\rm mis} * }$. The algorithmic illustration of \modelname's E-step is summarized in Algorithm~\ref{alg:impute}.
% \vspace{-8pt}

\subsection{Implementations}
% \begin{wrapfigure}[18]{r}{0.4\textwidth}
% \begin{center}
% \vspace{-20pt}
% \hspace{5pt}
% \begin{algorithm}[H]
%     \caption{\modelname for Missing Data Imputation.}
%     \label{alg:model}
%     \KwIn{Data $\mathbf{x} = (\obs, \mis) \sim p(\mathbf{x})$. Number of iterations $K$. 
%     }
    
%     \KwOut{Imputed data at $K$-th iteration $\mathbf{x}^{(K)}$}
%     \BlankLine
%     $\mathbf{x} \gets  \text{One-hot} (\mathbf{x})$ \\
%     $\mathbf{x}^{{\rm obs}(0)} \gets {\rm Standardize} (\mathbf{x}^{{\rm obs}})$ \\
%     $\mathbf{x}^{{\rm mis}(0)} = \bm{0}$ \\
%     $\mathbf{x}^{(0)} = (\mathbf{x}^{{\rm obs}(0)}, \mathbf{x}^{{\rm mis}(0)}) $ \\
%     \For{$k \gets$ $1$ \KwTo $K$}{
%         $\bm{\theta}^{(k)} = $ M-step $(\mathbf{x}^{(k-1)})$ \\
%         $\mathbf{x}^{(k)} = $ E-step $(\mathbf{x}^{(k-1)}, \bm{\theta}^{(k)})$ \\
%     }
%     Return $\mathbf{x}^{(K)} = (\mathbf{x}^{{\rm obs}(K)}, \mathbf{x}^{{\rm mis}(K)} )$\\
% \end{algorithm}
% \end{center}
% \end{wrapfigure}

\textbf{Data Processing.}
Diffusion models are inherently suited for continuous data but not for discrete data. Given that real-world tabular data often contains both types of data, we use one-hot encoding to convert each dimension of discrete data into multi-dimensional 0/1 encoding, thereby treating it as continuous data
% \footnote{We've tried encoding techniques such as analog-bit encoding, but observe little difference.}
.  After that, we perform standardization on each column of data, ensuring that it has a mean of 0 and a variance of 1. The final imputation performance is measured by the difference between the predicted value and the ground-truth missing value \textbf{after standardization}.

\textbf{Initialization of Missing Data.}
The execution of the EM algorithm requires the initialized values of missing entries $\mathbf{x}^{\rm mis (0)}$, which might have a huge impact on the model's convergence. For simplicity, we initialize the missing entries of each column to be the mean of the column's observed values, equivalent to setting $\mathbf{x}^{\rm mis (0)} = 0$ everywhere (since the data has been standardized).

\textbf{Training.}
To obtain a more accurate estimation of complete data $\mathbf{x}$, \modelname iteratively executes the M-step and E-step. To be specific, let $\mathbf{x}^{(k)} = (\mathbf{x}^{{\rm obs} (k)}, \mathbf{x}^{{\rm obs} (k)}) =  (\mathbf{x}^{{\rm obs}}, \mathbf{x}^{{\rm obs} (k)})$ be the estimation of complete data at $k$-th iteration, $\bm{\theta}^{(k)}$ be the diffusion model's parameters at $k$-th iteration, we write $\bm{\theta}^{(k+1)}$ as a function of $\mathbf{x}^{(k)}$, i.e., $\bm{\theta}^{(k+1)} = $ M-step $(\mathbf{x}^{(k)})$, and $\mathbf{x}^{(k+1)}$ as a function of $\bm{\theta}^{(k+1)}$ and $\mathbf{x}^{(k)}$, i.e., $\mathbf{x}^{(k+1)} = $ E-step $(\mathbf{x}^{(k)}, \bm{\theta}^{(k+1)})$. Therefore, with the initialized $\mathbf{x}^{(0)}$ and the maximum iteration $K$, we are able to obtain $\mathbf{x}^{(k)}$ from $k = 1$ to $K$. 

\textbf{Inference.} For in-sample imputation, the imputed values are obtained iteratively during the training process. For out-of-sample imputation, with the observed incomplete data $\mathbf{x}$, the mask $\mathbf{m}$, and the trained score network $\bm{\epsilon}_{\bm{\theta}}$, we directly apply the E-step once in Algorithm~\ref{alg:impute} to obtain the imputation values $\mathbf{x}^*$.

% Combining the above designs, we present the overall description of \modelname in Algorithm~\ref{alg:model}.
\section{Experiments}
In this section, we conduct experiments to study the efficacy of the proposed \modelname in missing data imputation tasks. 
\subsection{Experimental Settings}
\textbf{Datasets.} We evaluate the proposed \modelname on ten public real-world datasets of varying scales. We consider five datasets of only continuous features: California, Letter, Gesture, Magic, and Bean, and four datasets of both continuous and discrete features: Adult, Default, Shoppers, and News. The detailed information of these datasets is presented in  Appendix~\ref{appendix:datasets}. Following previous works~\citep{mot, tdm}, we study three missing mechanisms: MCAR, MAR, and MNAR. Differences between the three settings are in Appendix~\ref{appendix:missing}. In this section, we only report the performance in the MCAR setting, while the results of the other two settings are in Appendix~\ref{appendix:additional-experiments}. In the main experiments, we set the missing rate as $r = 30\%$. For each dataset, we generate $10$ masks according to the missing mechanism and report the mean and standard deviation of the imputing performance. 

\textbf{Baselines.} We compare \modelname with $16$ powerful imputation methods from different categories: 1) Distribution-matching methods based on optimal transport, including TDM~\citep{tdm} and MOT~\citep{mot}. 2) Graph-based imputation methods, including GRAPE~\citep{grape}: a pure bipartite graph-based framework for data imputation, and IGRM~\citep{igrm}: a graph-based imputation method that iteratively reconstructs the friendship network. 3) Iterative methods, including EM with multivariate Gaussian priors~\citep{em}, MICE~\citep{mice}, MIRACLE~\citep{miracle}, SoftImpute~\citep{softimpute}, and MissForest~\citep{missforest}. 4) Deep generative models, including MIWAE~\citep{miwae}, GAIN~\citep{gain}, MCFlow~\citep{mcflow}, MissDiff~\citep{missdiff} and TabCSDI~\citep{tabcsdi}. It is worth noting that MissDiff and TabCSDI are also based on diffusion. We also compare with two recent SOTA imputing methods ReMasker~\citep{remasker} and HyperImpute~\citep{hyperimpute}.

\textbf{Evaluation Protocols.} For each dataset, we use $70\%$ as the training set, and the remaining $30\%$ as the testing set. All methods are trained on the training set. The imputation is applied to both the missing values in the training set and the testing set. Consequently, the imputation of the training set is the 'in-sample' setting, while imputing the testing set is the 'out-of-sample' setting. The performance of \modelname is evaluated by the divergence between the predicted values and ground-truth values of missing entries. For continuous features, we use Mean Absolute Error (MAE) and Root Mean Squared Error (RMSE), while for discrete features, we use Accuracy. Note that both the MAE and RMSE are calculated based on the input data after standardization (zero mean and unit variance). The implementation details and hyperparameter settings are presented in Appendix~\ref{appendix:hyperparameters}.

\subsection{Main Results (Mask Completely at Random)}

\begin{figure}[t]
    \centering
    \includegraphics[width = 1.0\linewidth]{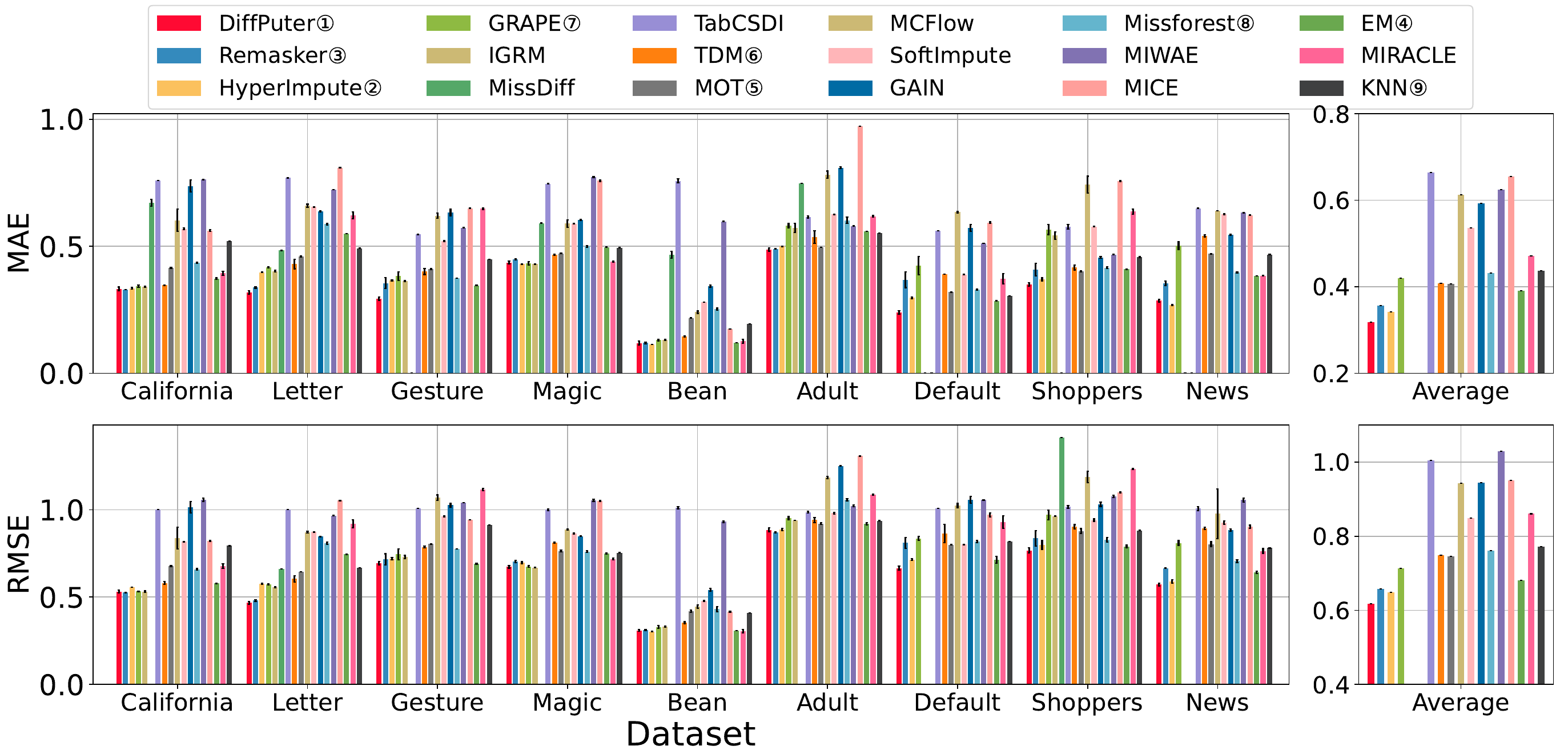} 
    \caption{MCAR, In-sample imputation performance on \textbf{continuous columns}: Comparing \modelname with \textbf{17 baselines} on imputing continuous data on all the \textbf{nine datasets}. A blank column indicates that the method fails or gets out-of-memory for that dataset. \modelname outperforms the most competitive baseline method by $6.94\%$ (MAE score) and $4.78\%$ (RMSE score) by average. The circled number after the model name denotes its ranking among all methods.}
    \label{fig:mcar-mae-rmse}
\end{figure}

\begin{table}[h] 
    \centering
    \caption{MCAR, In-sample imputation performance on \textbf{discrete columns}. Comparison of the imputation accuracy on discrete columns on five datasets. \modelname ranks the first among \textbf{19} imputation methods. Mean/Median/MF means using the mean/median/most frequent value as the imputation, which will give the same imputation result in this case.} 
    \label{tbl:mcar-in-sample-acc}
    \small
    \begin{threeparttable}
    {
    \resizebox{1.0\columnwidth}{!}
    % \scalebox{0.9}
    {
	\begin{tabular}{llcccc|cc}
            % \toprule[0.8pt]
            & \textbf{Method} & \textbf{Adult} &  \textbf{Default} & \textbf{Shoppers} & \textbf{News}  & \textbf{Average} & \textbf{Rank} \\
            \midrule[1.2pt] 
            \multicolumn{2}{l}{\hspace{-.5em} \textit{Statistical}} \\
            & Mean/Median/MF & $55.20${\tiny$\pm 0.02$} & $53.37${\tiny$\pm 0.11$} & $50.60${\tiny$\pm 0.29$} & $18.60${\tiny$\pm 0.14$} & $44.44$  & \circlednumber{15}\\
            \midrule    
            \multicolumn{2}{l}{\hspace{-.5em} \textit{Traditional iterative}} \\
            & EM~\citep{em} & $61.27${\tiny$\pm 0.12$} & $57.80${\tiny$\pm 0.22$} & $50.94${\tiny$\pm 0.27$} & $39.37${\tiny$\pm 0.06$} & $52.35$ & \circlednumber{10}\\
            & MICE~\citep{mice} & $50.31${\tiny$\pm 0.21$} & $51.88${\tiny$\pm 0.30$} & $43.43${\tiny$\pm 0.19$} & $30.05${\tiny$\pm 1.17$}  & $43.92$ & \circlednumber{16} \\
            & MIRACLE~\citep{miracle} & $62.28${\tiny$\pm 0.21$} & $55.79${\tiny$\pm 1.52$} & $45.74${\tiny$\pm 0.69$} & $39.33${\tiny$\pm 0.27$} &  $50.79$ & \circlednumber{12} \\
            & SoftImpute~\citep{softimpute} & $56.18${\tiny$\pm 0.08$} & $56.01${\tiny$\pm 0.07$} & $50.94${\tiny$\pm 0.27$} & $18.70${\tiny$\pm 0.05$} & $45.46$ & \circlednumber{14} \\
            & MissForest~\citep{missforest} & $63.51${\tiny$\pm 0.31$} & $56.97${\tiny$\pm 0.23$} & $51.10${\tiny$\pm 0.61$} & $37.55${\tiny$\pm 0.84$}  & $52.28$  & \circlednumber{11}\\
            \midrule
            \multicolumn{2}{l}{\hspace{-.5em} \textit{Dist. match}} \\
            & MOT~\citep{mot} & $63.85${\tiny$\pm 0.16$} & $71.95${\tiny$\pm 0.06$} & $56.91${\tiny$\pm 0.11$} & $38.62${\tiny$\pm 0.20$} & $57.83$ & \circlednumber{5} \\
            & TDM~\citep{tdm} & $62.64${\tiny$\pm 1.36$} & $63.41${\tiny$\pm 7.16$} & $55.00${\tiny$\pm 0.46$} & $30.62${\tiny$\pm 1.22$} & $52.92$ & \circlednumber{8} \\
            \midrule
            \multicolumn{2}{l}{\hspace{-.5em} \textit{GNN}} \\
            & GRAPE~\citep{grape} & $69.84${\tiny$\pm 0.25$} & $73.85${\tiny$\pm 0.01$} & $57.32${\tiny$\pm 0.15$} & $40.83${\tiny$\pm 0.44$} & $60.46$ & \circlednumber{3} \\
            & IGRM~\citep{igrm} & $69.21${\tiny$\pm 0.29$} & OOM & $57.45${\tiny$\pm 0.16$} & OOM &   - & - \\
            \midrule
            \multicolumn{2}{l}{\hspace{-.5em} \textit{Generative models}} \\
            & MIWAE~\citep{miwae} & $51.33${\tiny$\pm 0.08$} & $48.11${\tiny$\pm 0.02$} & $49.23${\tiny$\pm 0.61$} & $17.32${\tiny$\pm 0.01$} &  $41.50$ & \circlednumber{18} \\
            & GAIN~\citep{gain} & $53.73${\tiny$\pm 0.38$} & $54.49${\tiny$\pm 1.80$} & $47.25${\tiny$\pm 0.95$} & $34.20${\tiny$\pm 0.06$} & $47.42$ & \circlednumber{13} \\
            & MCFlow~\citep{mcflow} & $60.31${\tiny$\pm 0.18$} & $68.36${\tiny$\pm 0..9$} & $52.73${\tiny$\pm 0.29$} & $30.08${\tiny$\pm 3.01$} & $52.87$ & \circlednumber{9} \\      
            & TabCSDI~\citep{tabcsdi} & $58.72${\tiny$\pm 0.25$} & $50.57${\tiny$\pm 0.02$} & $43.24${\tiny$\pm 0.05$} & $17.49${\tiny$\pm 0.14$} & $42.51$ & \circlednumber{17} \\
            & MissDiff~\citep{missdiff} & $57.39${\tiny$\pm 7.34$} & $66.01${\tiny$\pm 1.88$} & $48.10${\tiny$\pm 1.86$} & $41.39${\tiny$\pm 0.59$} & $53.22$ & \circlednumber{7} \\
            \midrule
            \multicolumn{2}{l}{\hspace{-.5em}
            \textit{Other}} \\
            & KNN~\citep{knn} & $63.28${\tiny$\pm 0.52$} & $71.64${\tiny$\pm 0.34$} & $53.93${\tiny$\pm 0.28$} & $38.59${\tiny$\pm 0.17$} & $56.86$ & \circlednumber{6} \\
            & Remasker~\citep{remasker} & $69.18${\tiny$\pm 0.08$} & $76.88${\tiny$\pm 0.62$} & $57.86${\tiny$\pm 0.09$} & $44.33${\tiny$\pm 0.08$} &  $62.06$ & \circlednumber{2} \\
            & HyperImpute~\citep{hyperimpute} & $64.99${\tiny$\pm 0.06$} & $74.39${\tiny$\pm 2.07$} & $\textbf{59.19}${\tiny$\pm 0.04$} & $40.29${\tiny$\pm 0.96$}  & $59.72$ & \circlednumber{4} \\
            \cmidrule{2-8}
            & \modelname & $\redbf{70.12}${\tiny$\redbf{\pm 0.17}$} & $\redbf{77.64}${\tiny$\redbf{\pm 0.32}$} & $58.82${\tiny${\pm 0.09}$} & $\redbf{44.69}${\tiny $\redbf{\pm 0.13}$} &  $\redbf{62.82}$ & \circlednumber{1} \\
		\end{tabular}
              }
              }

	\end{threeparttable}

\end{table}

We first evaluate \modelname's performance in the in-sample imputation task. \Cref{fig:mcar-mae-rmse} compares the performance of different imputation methods regarding continuous columns using MAE and RMSE. \Cref{tbl:mcar-in-sample-acc} compares the performance of different methods regarding discrete columns using accuracy. Our observations on these experimental results are summarized as follows.

\textbf{Superior performance of \modelname.} Across all datasets, \modelname provides high-quality imputation results, matching the best methods on some datasets and significantly outperforming the second-best methods on the remaining datasets. Compared with the other two diffusion-based methods, MissDiff and TabCSDI, the substantial performance advantage of \modelname convincingly demonstrates the correctness and superiority of combining the EM algorithm with Diffusion.

\textbf{Traditional machine learning methods are still powerful imputors.} A well-recognized claim in tabular data machine learning is that traditional machine learning methods can sometimes be better than deep learning methods~\citep{ml-impute1,ml-impute2,forest_diff}, and we have similar observations in missing data imputation. For example, the simple EM algorithm under the mixture-Gaussian assumption and the KNN algorithm give decent imputation performance and outperform a lot of early deep generative imputation methods.

\textbf{The preference of different types of algorithms for heterogeneous tabular data.} Another interesting finding is that discriminative methods (e.g., Remasker, GRAPE, MOT) and generative methods (e.g., \modelname) have different preferences for continuous data and discrete data. Overall, generative models appear to be more effective at imputing continuous columns. As shown in \Cref{fig:mcar-mae-rmse}, our model significantly outperforms these discriminative models. Conversely, discriminative models are more proficient at imputing discrete data: in \Cref{tbl:mcar-in-sample-acc}, the best discriminative model, Remasker, can achieve results comparable to our method on almost all datasets.
 
Next, we compare the performance in the out-of-sample imputation tasks. Noting that some imputation methods are specifically designed for in-sample imputation and cannot be applied to the out-of-sample setting, the number of baselines in this task is significantly reduced. \Cref{tbl:mcar-out-mae-rmse} in Appendix~\ref{appendix:mcar} compares the MAEs and RMSEs in the OOS imputation task. Comparing it with the results of in-sample imputation, we can easily observe that some methods exhibit significant performance differences between the two settings. For example, graph-based methods GRAPE and IGRM perform well in in-sample imputation, but their performance degrades significantly in out-of-sample imputation. IGRM even fails on all datasets in the out-of-sample imputation setting. In contrast, our \modelname demonstrates excellent performance in both in-sample and out-of-sample imputation. This highlights \modelname's superior performance and robust generalization capabilities.

\begin{figure}[t]
\begin{minipage}[h]{0.52\linewidth}
    \vspace{0pt}
    \centering
    \includegraphics[width=1.0\textwidth,angle=0]{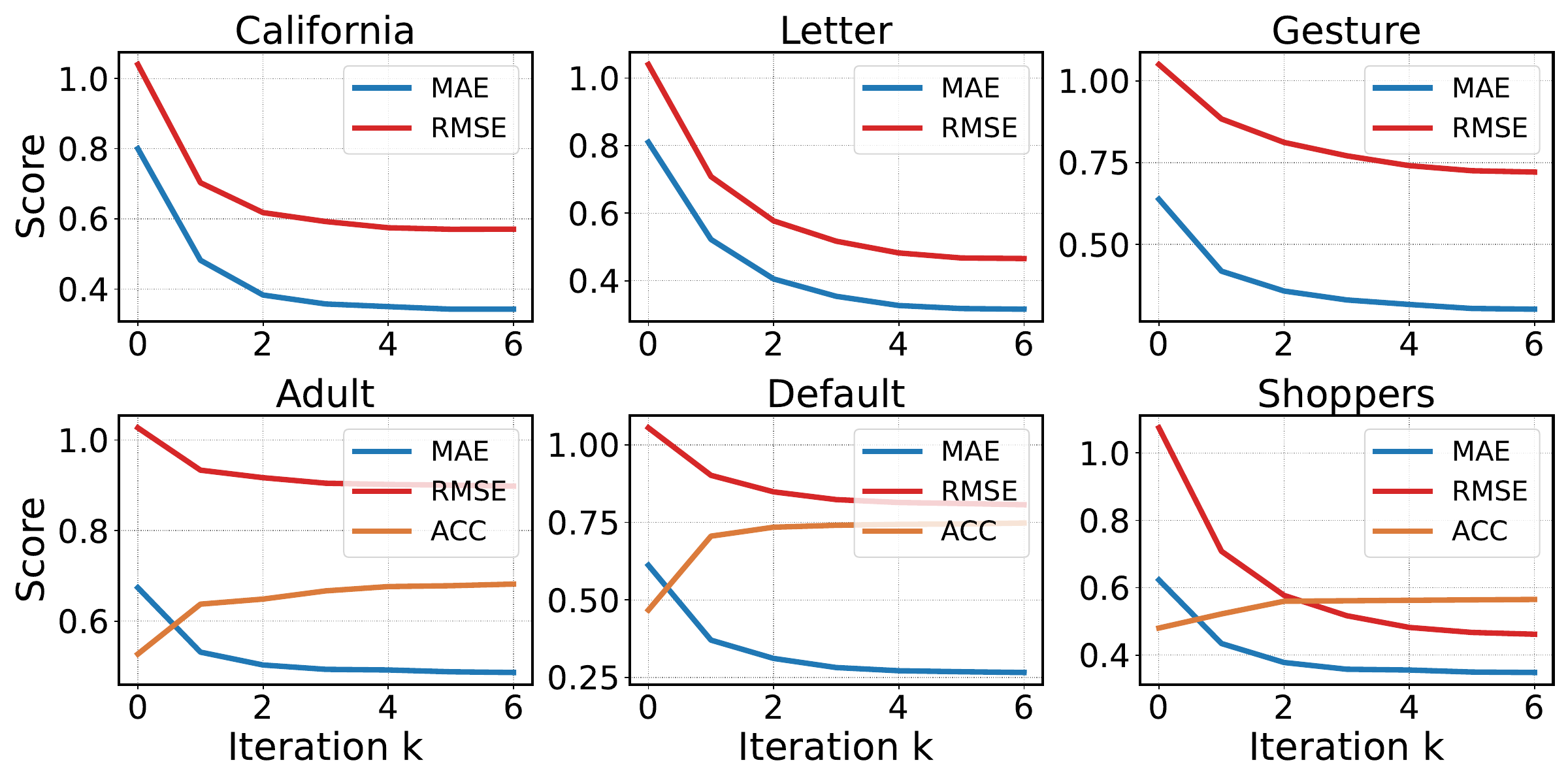}
    \caption{Impacts of the number of EM iterations. \modelname's performance steadily improves as the number of EM interactions increases.}
    \label{fig:iter}
\vspace{-10pt}
\end{minipage}
\hspace{0.01\textwidth}
\begin{minipage}[h]{0.46\linewidth}
    \vspace{0pt}
    \centering
    \includegraphics[width=1.0\textwidth,angle=0]{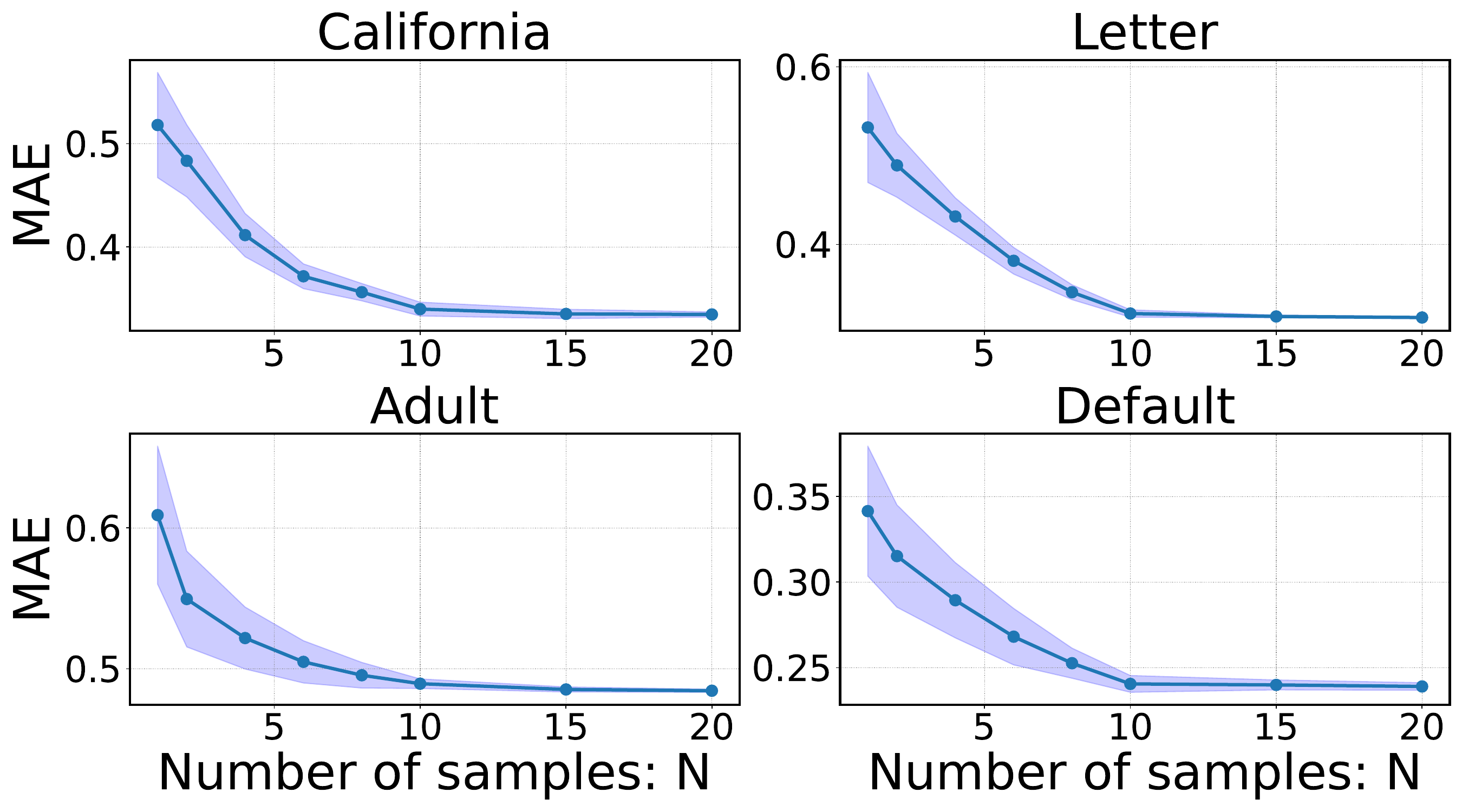}
    \caption{Impacts of the number of sampled imputations per iteration. A very small $N$ leads to poor performance and large variance.}
    \label{fig:num-sample}
\vspace{-10pt}
\end{minipage}
\end{figure}

\subsection{Ablation Studies}
% In this section, we conduct ablation studies to evaluate the efficacy of the individual designs of \modelname, as well as its robustness w.r.t. different factors. 

\textbf{Impacts of the number of EM iterations.} 
In Fig.~\ref{fig:iter}, we present the performance of \modelname's imputation results from increasing EM iterations. Note that $k = 0$ represents the imputation result of a randomly initialized denoising network, and $k = 1$ represents the performance of a pure diffusion model without iterative refinement. It is clearly observed that a single diffusion imputation achieves only suboptimal performance, while \modelname's performance steadily improves as the number of iterations increases. Additionally, we observe that \modelname does not require a large number of iterations to converge. In fact, $4$ to $5$ iterations are sufficient for \modelname to converge to a stable and satisfying state.

\textbf{Impacts of the number of sampled imputations per iteration.} To obtain the expected value of missing entries, we have to sample a number of examples from the conditional distribution (via the reverse diffusion process). Therefore, the number $N$ should have a huge impact on \modelname's performance. In \Cref{fig:num-sample}, we study the impacts of the number of samples on the imputation results. As expected, a very small $N$ leads to poor performance and large variance, while increasing the sampling number consistently improves the performance and reduces the variance. An optimal and stable performance can be achieved at $N \ge 10$. Since the sampling time is linear w.r.t. to the sample number, we use $N=10$ by default.

\begin{figure}[t]
\begin{minipage}[h]{0.52\linewidth}
    \vspace{0pt}
    \centering
    \includegraphics[width=1.0\textwidth,angle=0]{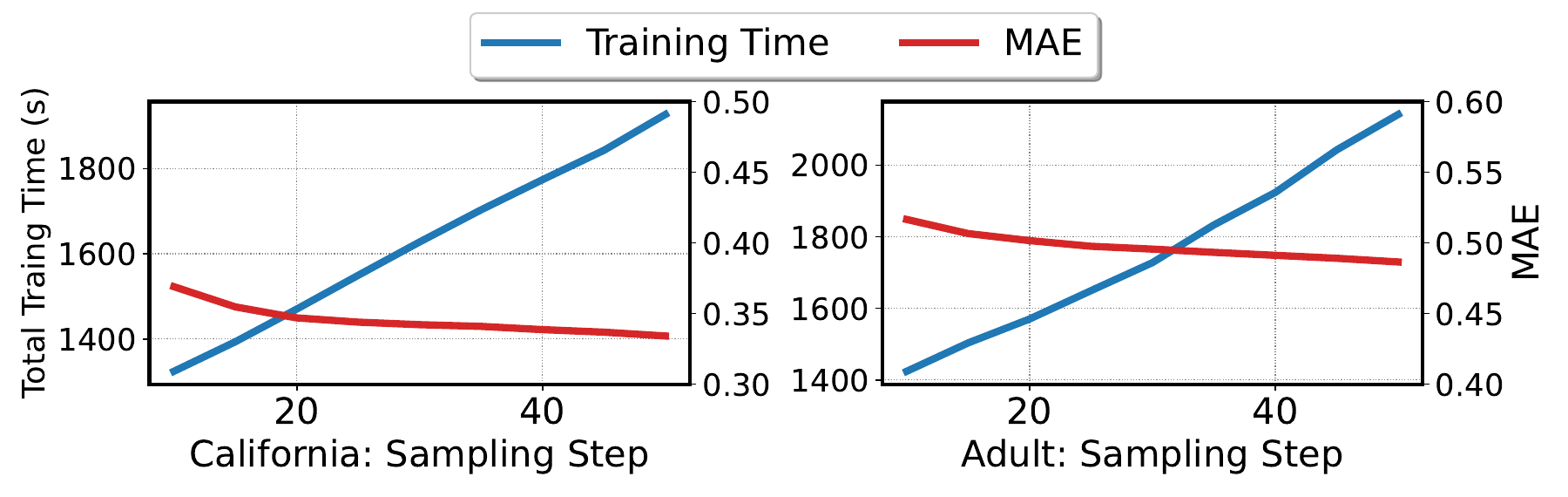}
    \caption{Impacts of sampling steps on the training time and imputation performance (MAE). Reducing diffusion sampling steps greatly reduces the training time at the cost of a slight performance drop.}
    \label{fig:sampling-step}
\end{minipage}
\hspace{0.01\textwidth}
\begin{minipage}[h]{0.46\linewidth}
    \vspace{0pt}
    \centering
    \includegraphics[width=1.0\textwidth,angle=0]{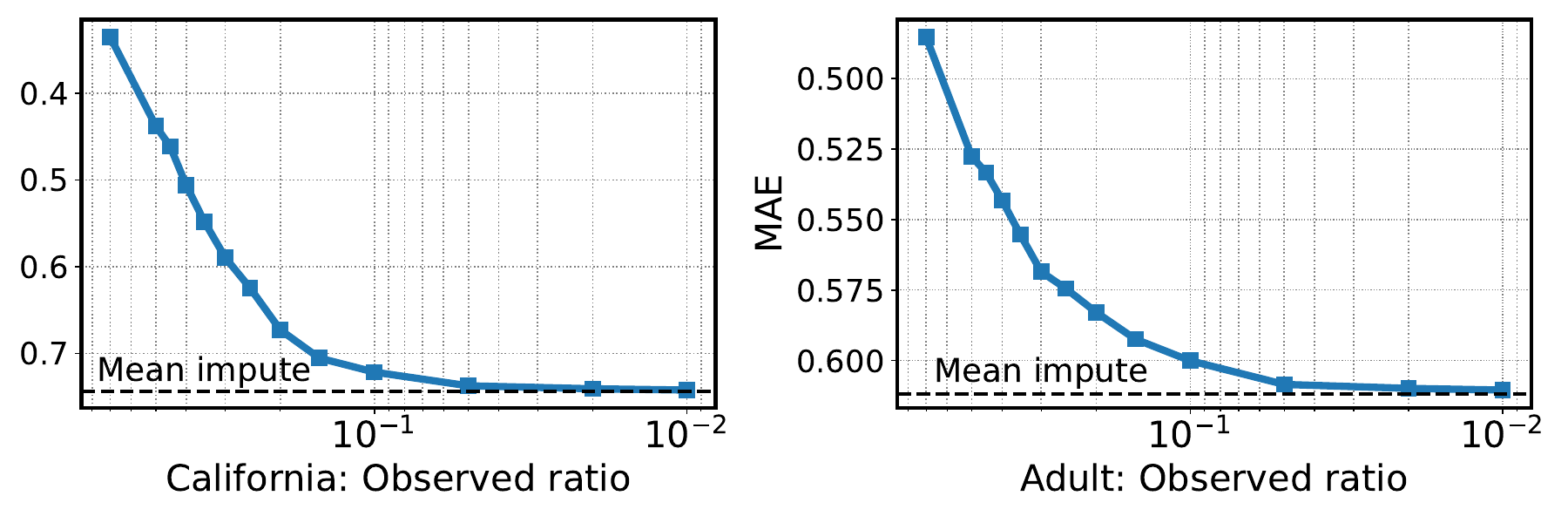}
    \caption{Impacts of extremely large missing ratios. The performance of \modelname is upper-bounded by the initialized missing values via mean imputation}
    \label{fig:missing-ratio}
\end{minipage}
\end{figure}

\textbf{Impacts of the number of sampling steps in diffusion.} Another hyperparameter impacting the performance and training speed is the number of sampling steps $M$ in the reverse process of diffusion. By default, \modelname set $M = 50$, and we study the impact of reducing $M$ in ~\Cref{fig:sampling-step}. As demonstrated, reducing the diffusion sampling steps can greatly reduce the training time at the cost of only a slight performance drop. More detailed, reducing the sampling step from 50 to 20 reduces about $25\%$ of training time cost, with only $3\%$ performance drop. Therefore, when time costs are high, and precision requirements are not stringent, the training speed can be further increased by reducing the sampling steps. 

\textbf{Impacts of large missing ratios.} In ~\Cref{fig:missing-ratio}, we study the performance of \modelname with missing ratio from $30\%$ to $99\%$ (observed ratio from $70\%$ to $1\%$). Consistent with the intuition, when the observed ratio decreases (i.e., missing ratio increases), there is a significant increase in the imputation MAE score, indicating that the imputation performance becomes increasingly worse. However, we can observe a clear performance lower bound when the missing ratio approaches $99\%$, which is close to the performance of directly imputing using the mean of observed values. This is because, in \modelname's first E-step, each column's missing values are always initialized as the mean of the observed ones. Therefore, \modelname can still give a reasonable imputation result even if the missing ratio is extremely large.

\begin{table}[t] 
    \centering
    \caption{Comparison of the real training time on an Nvidia RTX 4090 GPU. \modelname has a similar training cost to SOTA methods, yet brings from $8\%$ to $25\%$ performance improvement.}
    \label{tbl:training-time}
    \small
    \begin{threeparttable}
    {
    % \scalebox{0.92}
    \resizebox{\columnwidth}{!}
    {
	\begin{tabular}{lcccccc|c}
            \toprule[0.8pt]
           Datasets  &  MOT  & TDM & GRAPE & IGRM & HyperImpute & Remasker & \modelname  \\
            \midrule 
        Time: \textbf{California}  & $446.47s$ & $216.91s$ & $635.7s$ & $1267.5s$ & $1277.3s$ & $1320.1s$ & $1927.2s$  \\
        Time: \textbf{Adult} & $396.68s$ & $514.56s$ & $2347.1s$ & $3865.1s$ & $1806.9s$ & $1902.4s$    &$2142.9s$\\
        \midrule
        \textbf{\modelname's perf. improve.}  & $21.47\%$ & $21.37\%$ & $25.94\%$ & $20.97\%$ & $8.44\%$ & $10.65\%$ & -   \\
		\bottomrule[1.0pt] 
		\end{tabular}
   }
  }       
  \vspace{-10pt}
  \end{threeparttable}
\end{table}
\textbf{Comparison of Training Time.} Intuitively, the training of \modelname would be very time-consuming, as it requires iteratively alternating between the training and sampling processes of the diffusion model. However, in practice, \modelname's training efficiency is significantly improved through the following design: 1) light-weighted MLP-architectured denoising function; 2) reduced number of sampling steps ($M = 50 \ll 1000$ in traditional diffusion methods). To demonstrate this, we compare the training time of \modelname with other SOTA imputation methods. As demonstrated in \Cref{tbl:training-time}, \modelname has similar training cost (similar scale) to these representative SOTA methods, yet brings from 8\% to 25\% performance improvement, which is acceptable. 
\begin{wrapfigure}[13]{r}{0.44\textwidth}
\begin{center}
    \vspace{0pt}
    \centering
    \captionof{table}{Effects of combining EM with other Deep Generative Models.}
    \label{tbl:ablation-dgm}
    \small
    {
    \resizebox{0.45\columnwidth}{!}
    {
	\begin{tabular}{ccccc}
            \toprule[1.0pt]
            Methods & Adult & Default & Shoppers & News \\
            \midrule
            MIWAE & $0.5763$ & $0.5194$ & $0.5047$ & $0.6349$ \\
            HIWAE & $0.6155$ & $0.3989$ & $0.4707$ & $0.5032$ \\
            VAEM & $0.5568$ & $0.4292$ & $0.4626$ & $0.5204$ \\
            HH-VAEM & $0.5673$ &  - & $0.4589$ & - \\
            \midrule
            EM + MIWAE & $0.5661$ & $0.4993$ & $0.4554$ & $0.4171$ \\
            EM + HIWAE & $0.5974$ & $0.4314$ & $0.4961$ & $0.5222$ \\
            EM + VAEM & $0.5492$ & $0.4121$ & $0.4362$ & $0.5045$ \\
            EM + HH-VAEM & $0.5402$ & - & $0.4262$ & - \\
            \midrule
            \modelname & $\redbf{0.3425}$ & $\redbf{0.2661}$ & $\redbf{0.3485}$ & $\redbf{0.2855}$ \\ 
		\bottomrule[1.0pt] 
		\end{tabular}
        }
        }
\end{center}
\end{wrapfigure}

\textbf{Incorporating EM with other DGMs.}
 Finally, we try to combine EM algorithms with other deep generative models that are easy to perform conditional sampling, including MIWAE~\citep{miwae}, HIWAE~\citep{hiwae}, VAEM~\citep{vaem} and HH-VAEM~\citep{hhavem}. Based on Table~\ref{tbl:ablation-dgm}, our experimental results demonstrate that combining EM with other Deep Generative Models leads to performance improvements. However, it's noteworthy that despite these improvements, our proposed \modelname still outperforms all combinations, achieving superior results, where lower scores indicate better performance. This demonstrates the effectiveness and robustness of our approach compared to existing methods, even when they are enhanced through combination with EM.
\section{Conclusions}
In this paper, we have proposed \modelname for missing data imputation. \modelname is an iterative method that combines the Expectation-Maximization algorithm and diffusion models, where the diffusion model serves as both the density estimator and missing data imputer. We demonstrate theoretically that the training and sampling process of a diffusion model precisely corresponds to the M-step and E-step of the EM algorithm. Therefore, we can iteratively update the density of the complete data and the values of the missing data. Extensive experiments have demonstrated the efficacy of the proposed method.

% \subsubsection*{Author Contributions}
% If you'd like to, you may include  a section for author contributions as is done
% in many journals. This is optional and at the discretion of the authors.

% \subsubsection*{Acknowledgments}
% Use unnumbered third level headings for the acknowledgments. All
% acknowledgments, including those to funding agencies, go at the end of the paper.

\clearpage
\newpage
\subsubsection*{Acknowledgments}
This work is supported in part by NSF under grants III-2106758, and POSE-2346158.
\bibliography{ref}

\begin{thebibliography}{44}
\providecommand{\natexlab}[1]{#1}
\providecommand{\url}[1]{\texttt{#1}}
\expandafter\ifx\csname urlstyle\endcsname\relax
  \providecommand{\doi}[1]{doi: #1}\else
  \providecommand{\doi}{doi: \begingroup \urlstyle{rm}\Url}\fi

\bibitem[Barnard \& Meng(1999)Barnard and Meng]{missing-reason-1}
John Barnard and Xiao-Li Meng.
\newblock Applications of multiple imputation in medical studies: from aids to nhanes.
\newblock \emph{Statistical methods in medical research}, 8\penalty0 (1):\penalty0 17--36, 1999.

\bibitem[Corneanu et~al.(2024)Corneanu, Gadde, and Martinez]{latentpaint}
Ciprian Corneanu, Raghudeep Gadde, and Aleix~M Martinez.
\newblock Latentpaint: Image inpainting in latent space with diffusion models.
\newblock In \emph{Proceedings of the IEEE/CVF Winter Conference on Applications of Computer Vision}, pp.\  4334--4343, 2024.

\bibitem[Dempster et~al.(1977)Dempster, Laird, and Rubin]{em-early}
Arthur~P Dempster, Nan~M Laird, and Donald~B Rubin.
\newblock Maximum likelihood from incomplete data via the em algorithm.
\newblock \emph{Journal of the royal statistical society: series B (methodological)}, 39\penalty0 (1):\penalty0 1--22, 1977.

\bibitem[Du et~al.(2024)Du, Melis, and Wang]{remasker}
Tianyu Du, Luca Melis, and Ting Wang.
\newblock Remasker: Imputing tabular data with masked autoencoding.
\newblock In \emph{International Conference on Learning Representations}, 2024.

\bibitem[Eckert et~al.(2020)Eckert, Fort, Schott, and Yang]{missing-reason-3}
Fabian Eckert, Teresa~C Fort, Peter~K Schott, and Natalie~J Yang.
\newblock Imputing missing values in the us census bureau's county business patterns.
\newblock Technical report, National Bureau of Economic Research, 2020.

\bibitem[Garc{\'\i}a-Laencina et~al.(2010)Garc{\'\i}a-Laencina, Sancho-G{\'o}mez, and Figueiras-Vidal]{em}
Pedro~J Garc{\'\i}a-Laencina, Jos{\'e}-Luis Sancho-G{\'o}mez, and An{\'\i}bal~R Figueiras-Vidal.
\newblock Pattern classification with missing data: a review.
\newblock \emph{Neural Computing and Applications}, 19:\penalty0 263--282, 2010.

\bibitem[Goodfellow et~al.(2014)Goodfellow, Pouget-Abadie, Mirza, Xu, Warde-Farley, Ozair, Courville, and Bengio]{gan}
Ian~J Goodfellow, Jean Pouget-Abadie, Mehdi Mirza, Bing Xu, David Warde-Farley, Sherjil Ozair, Aaron Courville, and Yoshua Bengio.
\newblock Generative adversarial nets.
\newblock In \emph{Proceedings of the 27th International Conference on Neural Information Processing Systems}, pp.\  2672--2680, 2014.

\bibitem[Hastie et~al.(2015)Hastie, Mazumder, Lee, and Zadeh]{softimpute}
Trevor Hastie, Rahul Mazumder, Jason~D Lee, and Reza Zadeh.
\newblock Matrix completion and low-rank svd via fast alternating least squares.
\newblock \emph{The Journal of Machine Learning Research}, 16\penalty0 (1):\penalty0 3367--3402, 2015.

\bibitem[Ho et~al.(2020)Ho, Jain, and Abbeel]{ddpm}
Jonathan Ho, Ajay Jain, and Pieter Abbeel.
\newblock Denoising diffusion probabilistic models.
\newblock In \emph{Proceedings of the 34th International Conference on Neural Information Processing Systems}, pp.\  6840--6851, 2020.

\bibitem[Jarrett et~al.(2022)Jarrett, Cebere, Liu, Curth, and van~der Schaar]{hyperimpute}
Daniel Jarrett, Bogdan~C Cebere, Tennison Liu, Alicia Curth, and Mihaela van~der Schaar.
\newblock Hyperimpute: Generalized iterative imputation with automatic model selection.
\newblock In \emph{International Conference on Machine Learning}, pp.\  9916--9937. PMLR, 2022.

\bibitem[Jolicoeur-Martineau et~al.(2024)Jolicoeur-Martineau, Fatras, and Kachman]{forest_diff}
Alexia Jolicoeur-Martineau, Kilian Fatras, and Tal Kachman.
\newblock Generating and imputing tabular data via diffusion and flow-based gradient-boosted trees.
\newblock In \emph{International Conference on Artificial Intelligence and Statistics}, pp.\  1288--1296. PMLR, 2024.

\bibitem[Karras et~al.(2022)Karras, Aittala, Aila, and Laine]{edm}
Tero Karras, Miika Aittala, Timo Aila, and Samuli Laine.
\newblock Elucidating the design space of diffusion-based generative models.
\newblock In \emph{Proceedings of the 36th International Conference on Neural Information Processing Systems}, pp.\  26565--26577, 2022.

\bibitem[Kingma \& Ba(2015)Kingma and Ba]{adam}
Diederik~P. Kingma and Jimmy Ba.
\newblock Adam: {A} method for stochastic optimization.
\newblock In \emph{International Conference on Learning Representations}, 2015.

\bibitem[Kingma \& Welling(2013)Kingma and Welling]{vae}
Diederik~P Kingma and Max Welling.
\newblock Auto-encoding variational bayes.
\newblock \emph{arXiv preprint arXiv:1312.6114}, 2013.

\bibitem[Kotelnikov et~al.(2023)Kotelnikov, Baranchuk, Rubachev, and Babenko]{tabddpm}
Akim Kotelnikov, Dmitry Baranchuk, Ivan Rubachev, and Artem Babenko.
\newblock Tabddpm: Modelling tabular data with diffusion models.
\newblock In \emph{International Conference on Machine Learning}, pp.\  17564--17579. PMLR, 2023.

\bibitem[Kyono et~al.(2021)Kyono, Zhang, Bellot, and van~der Schaar]{miracle}
Trent Kyono, Yao Zhang, Alexis Bellot, and Mihaela van~der Schaar.
\newblock Miracle: Causally-aware imputation via learning missing data mechanisms.
\newblock \emph{Advances in Neural Information Processing Systems}, 34:\penalty0 23806--23817, 2021.

\bibitem[Lalande \& Doya(2022)Lalande and Doya]{ml-impute1}
Florian Lalande and Kenji Doya.
\newblock Numerical data imputation: Choose knn over deep learning.
\newblock In \emph{International Conference on Similarity Search and Applications}, pp.\  3--10. Springer, 2022.

\bibitem[Lillard et~al.(1986)Lillard, Smith, and Welch]{missing-reason-2}
Lee Lillard, James~P Smith, and Finis Welch.
\newblock What do we really know about wages? the importance of nonreporting and census imputation.
\newblock \emph{Journal of Political Economy}, 94\penalty0 (3, Part 1):\penalty0 489--506, 1986.

\bibitem[Lugmayr et~al.(2022)Lugmayr, Danelljan, Romero, Yu, Timofte, and Van~Gool]{repaint}
Andreas Lugmayr, Martin Danelljan, Andres Romero, Fisher Yu, Radu Timofte, and Luc Van~Gool.
\newblock Repaint: Inpainting using denoising diffusion probabilistic models.
\newblock In \emph{Proceedings of the IEEE/CVF Conference on Computer Vision and Pattern Recognition}, pp.\  11461--11471, 2022.

\bibitem[Ma et~al.(2018)Ma, Tschiatschek, Palla, Hern{\'a}ndez-Lobato, Nowozin, and Zhang]{missing_ma2}
Chao Ma, Sebastian Tschiatschek, Konstantina Palla, Jos{\'e}~Miguel Hern{\'a}ndez-Lobato, Sebastian Nowozin, and Cheng Zhang.
\newblock Eddi: Efficient dynamic discovery of high-value information with partial vae.
\newblock \emph{arXiv preprint arXiv:1809.11142}, 2018.

\bibitem[Ma et~al.(2020)Ma, Tschiatschek, Turner, Hern{\'a}ndez-Lobato, and Zhang]{vaem}
Chao Ma, Sebastian Tschiatschek, Richard Turner, Jos{\'e}~Miguel Hern{\'a}ndez-Lobato, and Cheng Zhang.
\newblock Vaem: a deep generative model for heterogeneous mixed type data.
\newblock \emph{Advances in Neural Information Processing Systems}, 33:\penalty0 11237--11247, 2020.

\bibitem[Ma \& Chen(2018)Ma and Chen]{bayesian}
Zhihua Ma and Guanghui Chen.
\newblock Bayesian methods for dealing with missing data problems.
\newblock \emph{Journal of the Korean Statistical Society}, 47:\penalty0 297--313, 2018.

\bibitem[Mattei \& Frellsen(2019)Mattei and Frellsen]{miwae}
Pierre-Alexandre Mattei and Jes Frellsen.
\newblock Miwae: Deep generative modelling and imputation of incomplete data sets.
\newblock In \emph{International conference on machine learning}, pp.\  4413--4423. PMLR, 2019.

\bibitem[Muzellec et~al.(2020)Muzellec, Josse, Boyer, and Cuturi]{mot}
Boris Muzellec, Julie Josse, Claire Boyer, and Marco Cuturi.
\newblock Missing data imputation using optimal transport.
\newblock In \emph{International Conference on Machine Learning}, pp.\  7130--7140. PMLR, 2020.

\bibitem[Nazabal et~al.(2020{\natexlab{a}})Nazabal, Olmos, Ghahramani, and Valera]{hiwae}
Alfredo Nazabal, Pablo~M Olmos, Zoubin Ghahramani, and Isabel Valera.
\newblock Handling incomplete heterogeneous data using vaes.
\newblock \emph{Pattern Recognition}, 107:\penalty0 107501, 2020{\natexlab{a}}.

\bibitem[Nazabal et~al.(2020{\natexlab{b}})Nazabal, Olmos, Ghahramani, and Valera]{missvae}
Alfredo Nazabal, Pablo~M Olmos, Zoubin Ghahramani, and Isabel Valera.
\newblock Handling incomplete heterogeneous data using vaes.
\newblock \emph{Pattern Recognition}, 107:\penalty0 107501, 2020{\natexlab{b}}.

\bibitem[Ouyang et~al.(2023)Ouyang, Xie, Li, and Cheng]{missdiff}
Yidong Ouyang, Liyan Xie, Chongxuan Li, and Guang Cheng.
\newblock Missdiff: Training diffusion models on tabular data with missing values.
\newblock \emph{arXiv preprint arXiv:2307.00467}, 2023.

\bibitem[Paszke et~al.(2019)Paszke, Gross, Massa, Lerer, Bradbury, Chanan, Killeen, Lin, Gimelshein, Antiga, et~al.]{pytorch}
Adam Paszke, Sam Gross, Francisco Massa, Adam Lerer, James Bradbury, Gregory Chanan, Trevor Killeen, Zeming Lin, Natalia Gimelshein, Luca Antiga, et~al.
\newblock Pytorch: an imperative style, high-performance deep learning library.
\newblock In \emph{Proceedings of the 33rd International Conference on Neural Information Processing Systems}, pp.\  8026--8037, 2019.

\bibitem[Peis et~al.(2022{\natexlab{a}})Peis, Ma, and Hern{\'a}ndez-Lobato]{hhavem}
Ignacio Peis, Chao Ma, and Jos{\'e}~Miguel Hern{\'a}ndez-Lobato.
\newblock Missing data imputation and acquisition with deep hierarchical models and hamiltonian monte carlo.
\newblock \emph{Advances in Neural Information Processing Systems}, 35:\penalty0 35839--35851, 2022{\natexlab{a}}.

\bibitem[Peis et~al.(2022{\natexlab{b}})Peis, Ma, and Hern{\'a}ndez-Lobato]{missing_ma}
Ignacio Peis, Chao Ma, and Jos{\'e}~Miguel Hern{\'a}ndez-Lobato.
\newblock Missing data imputation and acquisition with deep hierarchical models and hamiltonian monte carlo.
\newblock \emph{Advances in Neural Information Processing Systems}, 35:\penalty0 35839--35851, 2022{\natexlab{b}}.

\bibitem[Pujianto et~al.(2019)Pujianto, Wibawa, Akbar, et~al.]{knn}
Utomo Pujianto, Aji~Prasetya Wibawa, Muhammad~Iqbal Akbar, et~al.
\newblock K-nearest neighbor (k-nn) based missing data imputation.
\newblock In \emph{2019 5th International Conference on Science in Information Technology (ICSITech)}, pp.\  83--88. IEEE, 2019.

\bibitem[Rezende \& Mohamed(2015)Rezende and Mohamed]{normalizing_flows}
Danilo Rezende and Shakir Mohamed.
\newblock Variational inference with normalizing flows.
\newblock In \emph{International conference on machine learning}, pp.\  1530--1538. PMLR, 2015.

\bibitem[Richardson et~al.(2020)Richardson, Wu, Lin, Xu, and Bernal]{mcflow}
Trevor~W Richardson, Wencheng Wu, Lei Lin, Beilei Xu, and Edgar~A Bernal.
\newblock Mcflow: Monte carlo flow models for data imputation.
\newblock In \emph{Proceedings of the IEEE/CVF Conference on Computer Vision and Pattern Recognition}, pp.\  14205--14214, 2020.

\bibitem[Song et~al.(2021{\natexlab{a}})Song, Durkan, Murray, and Ermon]{mle_diff}
Yang Song, Conor Durkan, Iain Murray, and Stefano Ermon.
\newblock Maximum likelihood training of score-based diffusion models.
\newblock In \emph{Advances in Neural Information Processing Systems}, 2021{\natexlab{a}}.

\bibitem[Song et~al.(2021{\natexlab{b}})Song, Sohl-Dickstein, Kingma, Kumar, Ermon, and Poole]{vp}
Yang Song, Jascha Sohl-Dickstein, Diederik~P Kingma, Abhishek Kumar, Stefano Ermon, and Ben Poole.
\newblock Score-based generative modeling through stochastic differential equations.
\newblock In \emph{The Ninth International Conference on Learning Representations}, 2021{\natexlab{b}}.

\bibitem[Stekhoven \& B{\"u}hlmann(2012)Stekhoven and B{\"u}hlmann]{missforest}
Daniel~J Stekhoven and Peter B{\"u}hlmann.
\newblock Missforest—non-parametric missing value imputation for mixed-type data.
\newblock \emph{Bioinformatics}, 28\penalty0 (1):\penalty0 112--118, 2012.

\bibitem[Suh \& Song(2023)Suh and Song]{ml-impute2}
Heajung Suh and Jongwoo Song.
\newblock A comparison of imputation methods using machine learning models.
\newblock \emph{Communications for Statistical Applications and Methods}, 30\penalty0 (3):\penalty0 331--341, 2023.

\bibitem[Van~Buuren \& Karin(2011)Van~Buuren and Karin]{mice}
Stef Van~Buuren and Karin.
\newblock mice: Multivariate imputation by chained equations in r.
\newblock \emph{Journal of statistical software}, 45:\penalty0 1--67, 2011.

\bibitem[Yoon et~al.(2018)Yoon, Jordon, and Schaar]{gain}
Jinsung Yoon, James Jordon, and Mihaela Schaar.
\newblock Gain: Missing data imputation using generative adversarial nets.
\newblock In \emph{International Conference on Machine Learning}, pp.\  5689--5698. PMLR, 2018.

\bibitem[You et~al.(2020)You, Ma, Ding, Kochenderfer, and Leskovec]{grape}
Jiaxuan You, Xiaobai Ma, Yi~Ding, Mykel~J Kochenderfer, and Jure Leskovec.
\newblock Handling missing data with graph representation learning.
\newblock \emph{Advances in Neural Information Processing Systems}, 33:\penalty0 19075--19087, 2020.

\bibitem[Zhang et~al.(2024)Zhang, Zhang, Srinivasan, Shen, Qin, Faloutsos, Rangwala, and Karypis]{tabsyn}
Hengrui Zhang, Jiani Zhang, Balasubramaniam Srinivasan, Zhengyuan Shen, Xiao Qin, Christos Faloutsos, Huzefa Rangwala, and George Karypis.
\newblock Mixed-type tabular data synthesis with score-based diffusion in latent space.
\newblock In \emph{The twelfth International Conference on Learning Representations}, 2024.

\bibitem[Zhao et~al.(2023)Zhao, Sun, Dezfouli, and Bonilla]{tdm}
He~Zhao, Ke~Sun, Amir Dezfouli, and Edwin~V Bonilla.
\newblock Transformed distribution matching for missing value imputation.
\newblock In \emph{International Conference on Machine Learning}, pp.\  42159--42186. PMLR, 2023.

\bibitem[Zheng \& Charoenphakdee(2022)Zheng and Charoenphakdee]{tabcsdi}
Shuhan Zheng and Nontawat Charoenphakdee.
\newblock Diffusion models for missing value imputation in tabular data.
\newblock In \emph{NeurIPS 2022 First Table Representation Workshop}, 2022.

\bibitem[Zhong et~al.(2023)Zhong, Gui, and Ye]{igrm}
Jiajun Zhong, Ning Gui, and Weiwei Ye.
\newblock Data imputation with iterative graph reconstruction.
\newblock In \emph{Proceedings of the AAAI Conference on Artificial Intelligence}, volume~37, pp.\  11399--11407, 2023.

\end{thebibliography}
\bibliographystyle{iclr2025_conference}

\newpage
\appendix
\section{Definition of Symbols}\label{appebdix:notations}
{In \Cref{tbl:symbol}, we explain the definition of every symbol used in this paper.}
\begin{table}[h] 
    \centering
    \caption{Explanations of the symbols used in this paper.} 
    \label{tbl:symbol}
    \small
    \begin{threeparttable}
    {
    {
	\begin{tabular}{llcccccc}
            \toprule[0.8pt]
            Symbol & Explanation   \\
            \midrule 
            $d$ & the number of columns of tabular data \\
            $\mathbf{x} \in \sR^{d}$ & random variable, representing a row of the tabular data    \\
            $p(\mathbf{x}) = p_{\rm data} (\mathbf{x})$ & the distribution of the complete data \\
            $\mathbf{m} \in \{0, 1\}^d$ & binary mask vector \\
            $\obs$ & the observed part of $\mathbf{x}$, indicated by $m_k = 0$ \\
            $\mis$ & the missing part of $\mathbf{x}$, indicated by $m_k = 1$ \\
            \midrule
            $t$ & the timestep of diffusion process \\
            $\sigma(t) = t$ & the noise level at timestep $t$ \\
            $T$ & the maximum timestep \\
            $\varepsilon \sim \mathcal{N}(\mathbf{0},\mathbf{I})$ & standard Gaussian noise \\
            $\mathbf{x}_0 = \mathbf{x} $ & unforwarded $\mathbf{x}$ (at timestep $0$)  \\
            $\obs_0 = \obs $ & observed part of $\mathbf{x}$  \\
            $\mis_0 = \mis $ & missing part of $\mathbf{x}$  \\
            $\mathbf{x}_t = \mathbf{x}_0 + \sigma(t) \varepsilon$  & forwarded $\mathbf{x}$ at timestep $t$ \\
            $\obs_t = \obs_0 + \sigma(t) \varepsilon$  & observed part of $\mathbf{x}_t$ \\
            $\mis_t = \mis_0 + \sigma(t) \varepsilon$  & missing part of $\mathbf{x}_t$ \\
            \midrule 
            $\mathbf{\theta}$ & diffusion model's parameters \\
            $p_{\mathbf{\theta}}(\mathbf{x})$ & density of data induced by the diffusion model \\
            $\tilde{\mathbf{x}}_T \sim \pi(\mathbf{x}) = \gN(\mathbf{0}, \sigma^2(T)\mathbf{I}) $ & sampled random noise for the reverse process  \\
            $\tilde{\mathbf{x}}_T^{\rm obs}$ & observed part of $\tilde{\mathbf{x}}_T$  \\
            $\tilde{\mathbf{x}}_T^{\rm mis}$ & missing part of $\tilde{\mathbf{x}}_T$ \\
            $\tilde{\mathbf{x}}_t$ & sampled data at timestep $t$ in the reverse process \\
            $\tilde{\mathbf{x}}_t^{\rm obs}$ & observed part of $\tilde{\mathbf{x}}_t$  \\
            $\tilde{\mathbf{x}}_t^{\rm mis}$ & missing part of $\tilde{\mathbf{x}}_t$ \\
            \midrule 
            $\hat{\mathbf{x}}_0^{\rm obs}$ & The exact ground-truth value(s) of the observed part \\
		\bottomrule[1.0pt] 
		\end{tabular}
   }
  }        
  \end{threeparttable}
\end{table}

\section{Proofs}\label{appendix:proofs}
\subsection{Proof for Theorem~\ref{theorem:impute}}\label{appendix:proof-theorem-impute}
\begin{proof}\label{proof:theorem-impute}
    % First, according to Eq.~\ref{eqn:impute-combine}, we have
    % \begin{equation}
    %     \tilde{\mathbf{x}}_{0}   = (1-\mathbf{m}) \odot \mathbf{x}_{0}^{\rm forward} + \mathbf{m} \odot \mathbf{x}_{0}^{\rm reverse} = (1-\mathbf{m}) \odot \mathbf{x}_{0} + \mathbf{m} \odot \mathbf{x}_{0}^{\rm reverse}.
    % \end{equation}
    % Since the mask $1-\mathbf{m}$ and $\mathbf{m}$ identify the observed / masked locations, respectively, it is evident that $\tilde{\mathbf{x}}_0^{\rm obs} = \obs_0$. Therefore, the observed entries of $\tilde{\vx}_0$ meets the requirement: $\tilde{\vx}_0^{\rm obs} = \mathbf{x}^{\rm obs} = \hat{\mathbf{x}}^{\rm obs}_0$.
    First, it is obvious that the observed entries from Eq.~\ref{eqn:impute-combine}, i.e., $\tilde{\mathbf{x}}_t^{\rm obs}$, satisfy 
    \begin{equation}
        \tilde{\mathbf{x}}_t^{\rm obs} \sim  p({\mathbf{x}}_t^{\rm obs} | {\mathbf{x}}^{\rm obs} = \hat{\mathbf{x}}_0^{\rm obs}), \; \forall t.
    \end{equation}

    Then, we introduce the following Lemma.

    \begin{lemma}\label{lemma:impute}
        Let $\tilde{\mathbf{x}}_{t} \sim p_{\bm{\theta}}(\mathbf{x}_t | \mathbf{x}^{\rm obs} = \hat{\mathbf{x}}_0^{\rm obs}) $. If $\tilde{\mathbf{x}}_{t-\Delta t}$ is a sample obtained from Eq.~\ref{eqn:impute-obs}, Eq.~\ref{eqn:impute-mis}, and Eq.~\ref{eqn:impute-combine}, then $\tilde{\mathbf{x}}_{t-\Delta t} \sim p_{\bm{\theta}}(\mathbf{x}_{t-\Delta t}|\obs = \hat{\mathbf{x}}_0^{\rm obs})$, when $\Delta t \rightarrow 0^+$.
    \end{lemma}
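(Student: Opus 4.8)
The plan is to prove the lemma coordinate-blockwise: verify separately what happens on the observed entries and on the missing entries, and then check that the merge in Eq.~\ref{eqn:impute-combine} reassembles them into the correct joint law, up to $o(\Delta t)$.

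I would begin with the observed block. Equation~\ref{eqn:impute-obs} sets $\tilde{\mathbf{x}}_{t-\Delta t}^{\rm obs} = \hat{\mathbf{x}}_0^{\rm obs} + \sigma(t-\Delta t)\,\bm{\varepsilon}^{\rm obs}$ with a fresh $\bm{\varepsilon}\sim\gN(\bm{0},\mathbf{I})$. Since the forward process in Eq.~\ref{eqn:forward} injects independent noise coordinate-by-coordinate, the forward kernel factorizes across coordinates, so $p(\mathbf{x}_{t-\Delta t}^{\rm obs}\mid\mathbf{x}_0) = p(\mathbf{x}_{t-\Delta t}^{\rm obs}\mid\mathbf{x}_0^{\rm obs}) = \gN\big(\mathbf{x}_0^{\rm obs},\sigma^2(t-\Delta t)\mathbf{I}\big)$; hence $\tilde{\mathbf{x}}_{t-\Delta t}^{\rm obs}$ is exactly a draw from $p(\mathbf{x}_{t-\Delta t}^{\rm obs}\mid\mathbf{x}^{\rm obs}=\hat{\mathbf{x}}_0^{\rm obs})$. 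The same independence gives the structural fact I would lean on: conditioned on $\mathbf{x}^{\rm obs}=\hat{\mathbf{x}}_0^{\rm obs}$, the observed-coordinate trajectory is a fixed shift of coordinate-wise-independent forward noise and is therefore \emph{independent} of the missing-coordinate trajectory under the joint law of $(\mathbf{x}_0,\dots,\mathbf{x}_T)$ induced by $p_{\bm{\theta}}$ and the forward process. In particular the target at time $t-\Delta t$ factorizes,
\begin{equation}
  p_{\bm{\theta}}\big(\mathbf{x}_{t-\Delta t}\mid\mathbf{x}^{\rm obs}=\hat{\mathbf{x}}_0^{\rm obs}\big) \;=\; \gN\big(\mathbf{x}_{t-\Delta t}^{\rm obs};\,\hat{\mathbf{x}}_0^{\rm obs},\,\sigma^2(t-\Delta t)\mathbf{I}\big)\cdot p_{\bm{\theta}}\big(\mathbf{x}_{t-\Delta t}^{\rm mis}\mid\mathbf{x}^{\rm obs}=\hat{\mathbf{x}}_0^{\rm obs}\big),
\end{equation}
so it remains only to show that the missing block $\tilde{\mathbf{x}}_{t-\Delta t}^{\rm mis}$ produced by Eqs.~\ref{eqn:impute-mis}--\ref{eqn:impute-combine} is distributed as the second factor; independence of $\tilde{\mathbf{x}}_{t-\Delta t}^{\rm mis}$ from $\tilde{\mathbf{x}}_{t-\Delta t}^{\rm obs}$ is automatic because Eq.~\ref{eqn:impute-obs} re-draws fresh noise.

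For the missing block I would invoke the standard reverse-time SDE / Fokker--Planck fact underlying the remark after Eq.~\ref{eqn:reverse} and Remark~\ref{remark:score-matching-mle}: one Euler step of the reverse SDE with the learned score defines a transition kernel that carries $p_{\bm{\theta}}(\mathbf{x}_t)$ to $p_{\bm{\theta}}(\mathbf{x}_{t-\Delta t})$ up to $o(\Delta t)$. Restricting the step to the missing coordinates, and feeding in the induction hypothesis $\tilde{\mathbf{x}}_t\sim p_{\bm{\theta}}(\mathbf{x}_t\mid\mathbf{x}^{\rm obs}=\hat{\mathbf{x}}_0^{\rm obs})$ together with the decoupling above, the missing block should be transported to $p_{\bm{\theta}}(\mathbf{x}_{t-\Delta t}^{\rm mis}\mid\mathbf{x}^{\rm obs}=\hat{\mathbf{x}}_0^{\rm obs})$; the merge in Eq.~\ref{eqn:impute-combine} then pastes the two now-independent blocks together to give $\tilde{\mathbf{x}}_{t-\Delta t}\sim p_{\bm{\theta}}(\mathbf{x}_{t-\Delta t}\mid\mathbf{x}^{\rm obs}=\hat{\mathbf{x}}_0^{\rm obs})$. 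Theorem~\ref{theorem:impute} then follows by chaining the lemma from $t=T$, where $\tilde{\mathbf{x}}_T\sim\pi(\mathbf{x})\approx p_{\bm{\theta}}(\mathbf{x}_T\mid\mathbf{x}^{\rm obs}=\hat{\mathbf{x}}_0^{\rm obs})$ because $\sigma(T)$ is large, down to $t=0$.

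The hard part is making this missing-block transport rigorous. The reverse update in Eq.~\ref{eqn:impute-mis} feeds the joint score $\bm{\epsilon}_{\bm{\theta}}(\tilde{\mathbf{x}}_t,t)$ the \emph{noised} observed entries $\tilde{\mathbf{x}}_t^{\rm obs}$ at time $t$, so its missing-coordinate component equals $\nabla_{\mathbf{x}_t^{\rm mis}}\log p_{\bm{\theta}}(\mathbf{x}_t^{\rm mis}\mid\mathbf{x}_t^{\rm obs}=\tilde{\mathbf{x}}_t^{\rm obs})$, whereas the reverse SDE that exactly targets $p_{\bm{\theta}}(\cdot\mid\mathbf{x}^{\rm obs}=\hat{\mathbf{x}}_0^{\rm obs})$ requires $\nabla_{\mathbf{x}_t^{\rm mis}}\log p_{\bm{\theta}}(\mathbf{x}_t^{\rm mis}\mid\mathbf{x}^{\rm obs}=\hat{\mathbf{x}}_0^{\rm obs})$ --- a conditioning on the \emph{clean} observed entries at time $0$. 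Reconciling these is the crux: one must argue that, because Eq.~\ref{eqn:impute-obs} re-synchronizes the observed block at every step to $\hat{\mathbf{x}}_0^{\rm obs}+\sigma(t)\bm{\varepsilon}$ --- precisely its forward marginal under the conditioning event --- and $\Delta t\to 0^{+}$, the trajectory stays pinned to the conditional manifold so that the two scores agree along it in the limit. This is the continuous-time counterpart of the ``harmonization'' phenomenon that RePaint~\cite{repaint} addresses by resampling, and I expect it to be the only place where genuine care is needed; the observed block and the recombination are routine Gaussian bookkeeping.
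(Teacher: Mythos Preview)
Your overall scaffolding matches the paper's exactly: you factorize the target conditional using the coordinate-wise independence of the forward noise, you handle the observed block by direct computation (identical to the paper), and you reduce the lemma to showing that the missing block lands in $p_{\bm{\theta}}(\mathbf{x}_{t-\Delta t}^{\rm mis}\mid\mathbf{x}^{\rm obs}=\hat{\mathbf{x}}_0^{\rm obs})$. You also correctly locate the only real difficulty.

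Where you diverge from the paper is in how that difficulty is handled. You frame it as a \emph{score mismatch}: the reverse step uses the score conditioned on the noisy $\tilde{\mathbf{x}}_t^{\rm obs}$, while the target process needs the score conditioned on the clean $\hat{\mathbf{x}}_0^{\rm obs}$, and you hope to argue these two scores coincide along the trajectory as $\Delta t\to 0^+$. That route is hard to make precise, because those two scores are genuinely different functions --- they do not ``agree in the limit'' pointwise --- and the correctness does not come from their equality but from the re-synchronization averaging things out. The paper bypasses this entirely by working at the level of transition kernels rather than scores. It writes
\[
p_{\bm{\theta}}(\mathbf{x}_{t-\Delta t}^{\rm mis}\mid\mathbf{x}^{\rm obs}=\hat{\mathbf{x}}_0^{\rm obs})
= \sE_{p_{\bm{\theta}}(\mathbf{x}_t\mid\mathbf{x}^{\rm obs}=\hat{\mathbf{x}}_0^{\rm obs})}\, p_{\bm{\theta}}(\mathbf{x}_{t-\Delta t}^{\rm mis}\mid\mathbf{x}_t,\,\mathbf{x}^{\rm obs}=\hat{\mathbf{x}}_0^{\rm obs}),
\]
and then invokes an approximate Markov/screening argument: when $\Delta t\to 0^+$, $\mathbf{x}_{t-\Delta t}^{\rm mis}$ is almost determined by the full $\mathbf{x}_t$, so the extra conditioning on $\hat{\mathbf{x}}_0^{\rm obs}$ is redundant and $p_{\bm{\theta}}(\mathbf{x}_{t-\Delta t}^{\rm mis}\mid\mathbf{x}_t,\hat{\mathbf{x}}_0^{\rm obs})\approx p_{\bm{\theta}}(\mathbf{x}_{t-\Delta t}^{\rm mis}\mid\mathbf{x}_t)$. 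The right-hand side is exactly the missing coordinates of one reverse-SDE step from $\tilde{\mathbf{x}}_t$, and the outer expectation is handled by the induction hypothesis (one Monte Carlo draw). This is still heuristic, but it is a cleaner and shorter resolution than reconciling two conditional score fields, and it makes transparent why $\Delta t\to 0^+$ is the right regime. I would recommend swapping your score-matching argument for this kernel-level one.
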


    \begin{proof}\label{proof:lemma-impute}
        First, note that $\mathbf{x}_t$ ($\mathbf{x}_{t-\Delta t}$) is obtained via adding \textbf{dimensionally-independent} Gaussian noises to $\mathbf{x}_0 = \mathbf{x}$ (see the forward process in Eq.~\ref{eqn:forward}), we have
        \begin{equation}\label{eqn:independent-condition}
        \begin{split}
            p_{\bm{\theta}}(\mathbf{x}_{t-\Delta t} | \obs = \hat{\mathbf{x}}_0^{\rm obs}) & = p_{\bm{\theta}}(\obs_{t-\Delta t}, \mis_{t-\Delta t} | \obs = \hat{\mathbf{x}}_0^{\rm obs}) \\
            & = \underbrace{p(\obs_{t-\Delta t} | \obs = \hat{\mathbf{x}}_0^{\rm obs})}_{\text{Correspond to $\tilde{\mathbf{x}}_{t-\Delta t}^{\rm obs}$ in Eq.~\ref{eqn:impute-obs}}} \cdot p_{\bm{\theta}}(\mis_{t-\Delta t} | \obs = \hat{\mathbf{x}}_0^{\rm obs}).
        \end{split}
        \end{equation}
    Therefore, $\tilde{\mathbf{x}}_{t - \Delta t}$'s observed entries $\tilde{\mathbf{x}}_{t-\Delta t}^{\rm obs}$ is sampled from distribution $p(\obs_{t-\Delta t} | \obs = \hat{\mathbf{x}}_0^{\rm obs})$. Then, we turn to the missing entries, where we have the following derivation:
    \begin{equation}
    \begin{split}
        p_{\bm{\theta}}(\mis_{t-\Delta t} | \obs = \hat{\mathbf{x}}_0^{\rm obs}) & = \int p_{\bm{\theta}}(\mis_{t-\Delta t} | \mathbf{x}_{t}, \obs = \hat{\mathbf{x}}_0^{\rm obs}) p_{\bm{\theta}}(\mathbf{x}_t | \obs = \hat{\mathbf{x}}_0^{\rm obs} ) {\rm d}\mathbf{x}_t \\
        & = \sE_{p_{\bm{\theta}}(\mathbf{x}_t | \obs = \hat{\mathbf{x}}_0^{\rm obs} )} p_{\bm{\theta}}(\mis_{t-\Delta t} | \mathbf{x}_{t}, \obs = \hat{\mathbf{x}}_0^{\rm obs}) \\
        & \approx \sE_{p_{\bm{\theta}}(\mathbf{x}_t | \obs = \hat{\mathbf{x}}_0^{\rm obs} )} p_{\bm{\theta}}(\mis_{t-\Delta t} | \mathbf{x}_{t} ) \\
        & = p_{\bm{\theta}}(\mis_{t-\Delta t} | \tilde{\mathbf{x}}_{t} ),\\
    \end{split}
    \end{equation}
    where $\tilde{\mathbf{x}}_t$ is a random sample from $p_{\bm{\theta}}(\mathbf{x}_t | \mathbf{x}^{\rm obs} = \hat{\mathbf{x}}^{\rm obs}_0)$.
    The first '$\approx$' holds because when $\Delta t \rightarrow 0$, $\mis_{t-\Delta t}$ is almost predictable via $\mathbf{x}_t$ without $\mathbf{x}_0^{\rm obs}$. Therefore, when $\tilde{\mathbf{x}}_{t} \sim p_{\bm{\theta}}(\mathbf{x}_t | \mathbf{x}^{\rm obs} = \hat{\mathbf{x}}_0^{\rm obs})$, $p_{\bm{\theta}}(\mis_{t-\Delta t} | \obs = \hat{\mathbf{x}}_0^{\rm obs})$ is approximately tractable via $p_{\bm{\theta}}(\mis_{t-\Delta t} | \tilde{\mathbf{x}}_{t} )$.

    Note that $\mathbf{x}_{t-\Delta t}^{\rm mis}$ denotes the missing entries of $\mathbf{x}_{t-\Delta t}$. According to the reverse denoising process in Eq.~\ref{eqn:reverse}, given $\mathbf{x}_t = \tilde{\mathbf{x}}_t$, $\mathbf{x}_{t-\Delta t}$ is obtained via integrating ${\rm d}\mathbf{x}_t$ from $t$ to $t-{\Delta t}$, i.e.,
    \begin{equation}
        \mathbf{x}_{t - \Delta t} = \tilde{\mathbf{x}}_t +  \int_{t}^{t - \Delta t} {\rm d} \mathbf{x}_t.
    \end{equation}
    Therefore, $\tilde{\mathbf{x}}^{\rm mis}_{t-\Delta t} = {\mathbf{x}}^{\rm mis}_{t-\Delta t}$ obtained from Eq.~\ref{eqn:impute-mis} is a sample from $p_{\bm{\theta}}(\mis_{t-\Delta t} | \tilde{\mathbf{x}}_{t})$. And, approximately, $\tilde{\mathbf{x}}^{\rm mis}_{t-\Delta t} \sim p_{\bm{\theta}}(\mis_{t-\Delta t} | \obs = \hat{\mathbf{x}}_0^{\rm obs})$.

    Since $\tilde{\mathbf{x}}^{\rm obs}_{t-\Delta t} \sim p(\obs_{t-\Delta t} | \obs = \hat{\mathbf{x}}_0^{\rm obs}), \tilde{\mathbf{x}}^{\rm mis}_{t-\Delta t} \sim p_{\bm{\theta}}(\mis_{t-\Delta t} | \obs = \hat{\mathbf{x}}_0^{\rm obs})$, we have $\tilde{\mathbf{x}}_{t - \Delta t} \sim p_{\bm{\theta}}(\mathbf{x}_{t - \Delta t} | \obs = \hat{\mathbf{x}}_0^{\rm obs})$, according to Eq.~\ref{eqn:independent-condition}. Therefore, the proof for Lemma~\ref{lemma:impute} is completed.
    \end{proof}

    With Lemma~\ref{lemma:impute}, we are able to prove Theorem~\ref{theorem:impute} via induction, as long as $\tilde{\mathbf{x}}_T$ is also sampled from $p(\mathbf{x}_T | \mathbf{x}^{\rm obs} = \hat{\mathbf{x}}_0^{\rm obs})$. This holds because $p(\mathbf{x}_T | \mathbf{x}_0) \approx p(\mathbf{x}_{T})$, given the condition that $\mathbf{x}_0 = \mathbf{x}$ has zero mean and unit variance, and $\sigma(T) \gg 1$. 

    Note that the score function $\nabla_{\mathbf{x}_t}\log p(\mathbf{x}_t)$ in Eq.~\ref{eqn:reverse} is intractable, it is replaced with the output of the score neural network $\bm{\epsilon}_{\bm{\theta}}(\mathbf{x}_t, t)$. Therefore, the finally obtained distribution can be rewritten as $p_{\bm{\theta}}(\mathbf{x} | \obs = \hat{\mathbf{x}}_0^{\rm obs})$. Therefore, the proof for Theorem ~\ref{theorem:impute} is complete.

\end{proof}

\section{Further explanation of the Diffusion Model}
The diffusion model we adopt in Section~\ref{sec:m-step} is actually a simplified version of the Variance-Exploding SDE proposed in ~\citep{vp}.

Note that ~\citep{vp} has provided a unified formulation via the Stochastic Differential Equation (SDE) and defines the forward process of Diffusion as
\begin{equation}\label{eqn:diffusion-forward}
    {\rm d} \mathbf{x} =  \vf(\mathbf{x}, t){\rm d}t + g(t) \; {\rm d} \vw_t,
\end{equation}
where $\vf(\cdot)$ and $g(\cdot)$ are the drift and diffusion coefficients and are selected differently for different diffusion processes, e.g., the variance preserving (VP) and variance exploding (VE) formulations. $\bm{\omega}_t$ is the standard Wiener process. Usually, $f(\cdot)$ is of the form $\vf(\mathbf{x}, t)  = f(t) \; \mathbf{x} $. Thus, the SDE can be equivalently written as
\begin{equation}
    {\rm d} \mathbf{x} = f(t)\; \mathbf{x} \; {\rm d}t + g(t) \; {\rm d}\vw_t.
\end{equation}

Let $\mathbf{x}$ be a function of the time $t$, i.e., $\mathbf{x}_t = \mathbf{x}(t)$, then the conditional distribution of $\mathbf{x}_t$ given $\mathbf{x}_0$ (named as the perturbation kernel of the SDE) could be formulated as:
\begin{equation}\label{eqn:kernel}
    p(\mathbf{x}_t | \mathbf{x}_0) = \gN(\mathbf{x}_t; s(t)\mathbf{x}_0, s^2(t) \sigma^2(t) \mI),
\end{equation}
where
\begin{equation}
    s(t) = \exp \left(\int_0^t f(\xi){\rm d}\xi \right), \; \mbox{and} \; \sigma(t) = \sqrt{\int_0^t \frac{g^2(\xi)}{s^2(\xi)} {\rm d} \xi}.
\end{equation}
Therefore, the forward diffusion process could be equivalently formulated by defining the perturbation kernels (via defining appropriate $s(t)$ and $\sigma(t)$).

Variance Exploding (VE) implements the perturbation kernel  Eq.~\ref{eqn:kernel} by setting $s(t) = 1$, indicating that the noise is directly added to the data rather than weighted mixing. Therefore, The noise variance (the noise level) is totally decided by $\sigma(t)$. The diffusion model used in \modelname belongs to VE-SDE, but we use linear noise level (i.e., $\sigma(t) = t$) rather than $\sigma(t) = \sqrt{t}$ in the vanilla VE-SDE~\citep{vp}. When $s(t) = 1$, the perturbation kernels become:
\begin{equation}
    p(\mathbf{x}_t|\mathbf{x}_0) = \gN(\mathbf{x}_t; \bm{0}, \sigma^2(t)\mathbf{I}) \; \Rightarrow  \; \mathbf{x}_t = \mathbf{x}_0 + \sigma(t)\bm{\varepsilon},
\end{equation}
which aligns with the forward diffusion process in Eq.~\ref{eqn:forward}.

The sampling process of diffusion SDE is given by:
\begin{equation}\label{eqn:cond-sampling}
    {\rm d} \mathbf{x}  = [\vf(\mathbf{x}, t) - g^2(t) \nabla_{\mathbf{x}}\log p_t(\mathbf{x})] {\rm d}t  + g(t) {\rm d} \vw_t.
\end{equation}
For VE-SDE, $s(t) = 1 \Leftrightarrow \vf(\mathbf{x}, t) = f(t)\cdot \mathbf{x} = \bm{0}$, and 
\begin{equation}
\begin{split}
    & \sigma(t)  = \sqrt{\int_0^t g^2(\xi) {\rm d} \xi} \Rightarrow \int_0^t g^2(\xi) {\rm d} \xi = \sigma^2(t),  \\
    & g^2(t) = \frac{{\rm d} \sigma^2(t) }{ {\rm d} t} = 2\sigma(t)\dot{\sigma}(t), \\
    & g(t) = \sqrt{2\sigma(t) \dot{\sigma}(t)}.
\end{split}
\end{equation}
Plugging $g(t)$ into Eq.~\ref{eqn:cond-sampling}, the reverse process in Eq.~\ref{eqn:reverse} is recovered:
\begin{equation}
    {\rm d}\mathbf{x}_t = -2\sigma(t)\dot{\sigma}(t)\nabla_{\mathbf{x}_t} \log p(\mathbf{x}_t) {\rm d}t + \sqrt{2\sigma(t) \dot{\sigma}(t)} {\rm d} \bm{\omega}_t.
\end{equation}

\section{Experimental Details}\label{appendix:details}
\subsection{Configurations.} 
We conduct all experiments with:
\begin{itemize}
    \item Operating System: Ubuntu 22.04.3 LTS
    \item CPU: Intel 13th Gen Intel(R) Core(TM) i9-13900K
    \item GPU: NVIDIA GeForce RTX 4090 with 24 GB of Memory
    \item Software: CUDA 12.2, Python 3.9.16, PyTorch~\citep{pytorch} 1.12.1
\end{itemize}

\subsection{Datasets}\label{appendix:datasets}
 We use ten real-world datasets of varying scales, and all of them are available at Kaggle\footnote{\url{https://www.kaggle.com/}} or the UCI Machine Learning repository\footnote{\url{https://archive.ics.uci.edu/}}. We consider five datasets of only continuous features: California\footnote{\url{https://www.kaggle.com/datasets/camnugent/california-housing-prices}}, Letter\footnote{\url{https://archive.ics.uci.edu/dataset/59/letter+recognition}}, Gestur\footnote{\url{https://archive.ics.uci.edu/dataset/302/gesture+phase+segmentation}}, Magic\footnote{\url{https://archive.ics.uci.edu/dataset/159/magic+gamma+telescope}}, and Bean\footnote{\url{https://archive.ics.uci.edu/dataset/602/dry+bean+dataset}}, and five datasets of both continuous and discrete features: Adult\footnote{\url{https://archive.ics.uci.edu/dataset/2/adult}}, Default\footnote{\url{https://archive.ics.uci.edu/dataset/350/default+of+credit+card+clients}}, Shoppers\footnote{\url{https://archive.ics.uci.edu/dataset/468/online+shoppers+purchasing+intention+dataset}}, and News\footnote{\url{https://archive.ics.uci.edu/dataset/332/online+news+popularity}}. The statistics of these datasets are presented in Table~\ref{tbl:stat-dataset}. 
\begin{table}[h] 
    \centering
    \caption{Statistics of datasets. \# Num stands for the number of numerical columns, and \# Cat stands for the number of categorical columns.} 
    \label{tbl:stat-dataset}
    \small
    \begin{threeparttable}
    {
    \scalebox{0.95}
    {
	\begin{tabular}{lccccccccc}
            \toprule[0.8pt]
            \textbf{Dataset}  &  \# Rows  & \# Num & \# Cat & \# Train (In-sample)  & \# Test (Out-of-Sample)  \\
            \midrule 
            \textbf{California} Housing  & $20,640$ & $9$ & - & $14,303$ & $6,337$   \\
            \textbf{Letter} Recognition & $20,000$ & $16$ & - & $14,000$ & $6,000$ \\
            \textbf{Gesture} Phase Segmentation & $9,522$ & $32$ & - & $6,665$ & $2,857$ \\
            \textbf{Magic} Gamma Telescope & $19,020$ & $10$ & - & $13,314$ & $5,706$  \\
            Dry \textbf{Bean} & $13,610$ & $17$ & - & $9,527$ & $4,083$ \\
            \midrule
            \textbf{Adult} Income & $32,561$ & $6$ & $8$ & $22,792$ & $9,769$ \\
            \textbf{Default} of Credit Card Clients & $30,000$ & $14$ & $10$ & $21,000$ & $9,000$\\
            Online \textbf{Shoppers} Purchase & $12,330$ & $10$ & $7$ & $8,631$ & $3,699$ \\
            Online \textbf{News} Popularity & $39,644$ & $45$ & $2$ & $27,790$ & $11,894$  \\
		\bottomrule[1.0pt] 
		\end{tabular}
   }
  }        
  \end{threeparttable}
\end{table}

\subsection{Missing mechanisms}\label{appendix:missing}
According to how the masks $\mathbf{m}$ are generated, there are three mainstream mechanisms of missingness, namely missing patterns: 1) Missing completely at random (MCAR) refers to the case that the probability of an entry being missing is independent of the data, i.e., $p(\mathbf{m} | \mathbf{x}) = p(\mathbf{m})$. 2) In missing at random (MAR), the probability of missingness depends only on the observed values, i.e., $p(\mathbf{m} | \mathbf{x}) = p(\mathbf{m} | \obs)$ 3) All other cases are classified as missing not at random (MNAR), where the probability of missingness might also depend on other missing entries.

We follow the methods proposed in \citet{tdm} to implement MAR and MNAR.

\begin{itemize}
    \item {"For MAR, we first sample a subset of features (columns in $X$) that will not contain missing values, and then we use a logistic model with these non-missing columns as input to determine the missing values of the remaining columns, and we employ line search of the bias term to get the desired proportion of missing values."}
    \item {For MNAR, we use the first approach proposed in [4], "Using a logistic model with the input masked by MCAR".  Specifically, similar to the MAR setting, we first divide the data columns into two groups, with one group serving as input for the logistic model, outputting the missing probability for the other set of columns. The difference is that after determining the missing probability of the second set, we apply MCAR to the input columns (the first set). Hence, missing values from the second set will depend on the masked values of the first set.}
\end{itemize}

The code for generating masks according to the three missing mechanisms is also provided at \url{https://anonymous.4open.science/r/ICLR-DiffPuter}. 

\subsection{Implementations and hyperparameters}\label{appendix:hyperparameters}
We use a fixed set of hyperparameters, which will save significant efforts in hyperparameter-tuning when applying \modelname to more datasets. For the diffusion model, we set the maximum time $T = 80$, the noise level $\sigma(t) = t$, which is linear to $t$. The score/denoising neural network $\bm{\epsilon}(\mathbf{x}_t, t)$ is implemented as a $5$-layer MLP with hidden dimension $1024$. $t$ is transformed to sinusoidal timestep embeddings and then added to $\mathbf{x}_t$, which is subsequently passed to the denoising function. 
When using the learned diffusion model for imputation, we set the number of discrete steps $M = 50$ and the number of sampling times per data sample $N = 10$. \modelname is implemented with Pytorch, and optimized using Adam~\citep{adam} optimizer with a learning rate of $1 \times 10^{-4}$.

\paragraph{Architecture of denoising neural network.} We use the same architecture of denoising neural network as in two recent tabular diffusion models for tabular data synthesis~\citep{tabddpm, tabsyn}.
The denoising MLP takes the current time step $t$ and the feature vector $\mathbf{x}_t \in \sR^{1\times d}$ as input. First, $\mathbf{x}_t$ is fed into a linear projection layer that converts the vector dimension to be $d_{\rm hidden} = 1024$:
\begin{equation}
    \mathbf{h}_0 = \texttt{FC}_{\rm in}(\mathbf{x}_t) \in \sR^{1\times d_{\rm hidden}},
\end{equation}
where $\mathbf{h}_0$ is the transformed vector, and $d_{\rm hidden}$ is the output dimension of the input layer.

Then, following the practice in TabDDPM~\citep{tabddpm}, the sinusoidal timestep embeddings $\mathbf{t}_{\rm emb} \in \sR^{1\times d_{\rm hidden}}$  
is added to $\mathbf{h}_0$ to obtain the input vector $\mathbf{h}_{\rm hidden}$:
\begin{equation}
    \mathbf{h}_{\rm in} =  \mathbf{h}_0 + \mathbf{t}_{\rm emb}.
\end{equation}

The hidden layers are three fully connected layers of the size $d_{\rm hidden} - 2*d_{\rm hidden} - 2*d_{\rm hidden} - d_{\rm hidden}$, with SiLU activation functions: 
\begin{equation}
\begin{split}
    \mathbf{h}_1 & = \texttt{SiLU}(\texttt{FC}_{1}(\mathbf{h}_0) \in \sR^{1\times 2*d_{\rm hidden}}), \\
    \mathbf{h}_2 & = \texttt{SiLU}(\texttt{FC}_{2}(\mathbf{h}_1) \in \sR^{1\times 2*d_{\rm hidden}}), \\
    \mathbf{h}_3 & = \texttt{SiLU}(\texttt{FC}_{3}(\mathbf{h}_2) \in \sR^{1\times d_{\rm hidden}}). \\
\end{split}
\end{equation}
The estimated score is obtained via the last linear layer:
\begin{equation}
      \bm{\epsilon}_{\theta}(\mathbf{x}_t, t) =  \mathbf{h}_{\rm out} = \texttt{FC}_{\rm out}(\mathbf{h}_3) \in \sR^{1\times d}.
\end{equation}
Finally, $\bm{\epsilon}_{\theta}(\mathbf{x}_t, t)$ is applied to Eq.~\ref{eqn:score-matching} for model training.

\subsection{Implementations of baselines}

We implement most of the baseline methods according to the publicly available codebases:
\begin{itemize}
    \item Remasker~\citep{remasker}: \url{https://github.com/tydusky/remasker}.
    \item TDM~\citep{tdm}: \url{https://github.com/hezgit/TDM}
    \item MOT~\citep{mot}: \url{https://github.com/BorisMuzellec/MissingDataOT}
    \item GRAPE~\citep{grape}: \url{https://github.com/maxiaoba/GRAPE}
    \item IGRM~\citep{igrm}: \url{https://github.com/G-AILab/IGRM}
    \item TabCSDI~\citep{tabcsdi}: \url{https://github.com/pfnet-research/TabCSDI}
    \item MCFlow~\citep{mcflow}: \url{https://github.com/trevor-richardson/MCFlow}.
    \item For HyperImputer~\citep{hyperimpute}, MissForest~\citep{missforest}, MICE~\citep{mice}, SoftImpute~\citep{softimpute}, EM~\citep{em}, GAIN~\citep{gain}, MIRACLE~\citep{miracle}, and MIWAE~\citep{miwae}, we use the implementations at: \url{https://github.com/vanderschaarlab/hyperimpute}.
\end{itemize}
MissDiff~\citep{missdiff} does not provide its official implementations. Therefore, we obtain its results based on our own implementation.

The codes for all the methods are available at \url{https://anonymous.4open.science/r/ICLR-DiffPuter}

\subsection{Hyperparameter settings of baselines}\label{appendix:hyperparameter_baseline}
{
Most of the deep learning baselines recommend the use of one set of hyperparameters for all datasets. For these methods, we directly follow their guidelines and use the default hyperparameters:}
\begin{itemize}
    \item {ReMasker~\citep{remasker}: we use the recommended hyperparameters provided in Appendix A.2 in the original paper~\citep{remasker}. This set of hyperparameters is searched by tuning on the Letter dataset and is deployed for all the datasets in the original paper; hence, we follow this setting.}
    \item {HyperImpute~\citep{hyperimpute}: since HyperImpute works by searching over the space of classifiers/regressors and their hyperparameters, it does not have hyperparameters itself except parameters related to the AutoML search budget. We adopt the default budget parameters of HyperImpute's official implementation for all datasets. The default budget parameters and AutoML search space are provided\footnote{\url{https://github.com/vanderschaarlab/hyperimpute/blob/main/src/hyperimpute/plugins/imputers/plugin_hyperimpute.py}} and Table 5 in the original paper~\citep{hyperimpute}.}
    \item {MOT~\citep{mot} and TDM~\citep{tdm}: There is a main hyperparameter representing the number of subset pairs sampled from the dataset for computing optimal transport loss. Sinkhorn algorithm and TDM are controlled by hyperparameter n\_iter. While the default value is 3000, we set it as 12000 for all datasets to ensure the algorithm converges sufficiently. For the round-robin version of the algorithm, the number of sampled pairs is controlled by max\_iter and rr\_iter; we adopt the default value 15, which is enough for the algorithm to converge. For the remaining hyperparameters related to network architectures, we use the default ones for all datasets.}
    \item {kNN: we follow the common practice of selecting the number of nearest neighbors as $\sqrt{n}$, where $n$ is the number of samples in the dataset.}
    \item {GRAPE~\citep{grape} and IGRM~\citep{igrm}: we adopt the recommended set of hyperparameters used in the original paper for all datasets. For a detailed explanation of the meaning of the parameters, please see the github repos: GRAPE\footnote{\url{https://github.com/maxiaoba/GRAPE/blob/master/train_mdi.py}} and IGRM\footnote{\url{https://github.com/G-AILab/IGRM/blob/main/main.py}}.}
    \item {MissDiff: since the original implementation is not available and it is based on the diffusion model, for a fair comparison, we simply use the same set of hyperparameters with our DiffPuter.}
    \item {TabCSDI~\citep{tabcsdi}: we follow the guide for selecting hyperparameters in the original paper (Appendix B in [3]). Specifically, we use a large version of the TabCSDI model with a number of layers set to 4 (see more detailed hyperparameters about the large TabCSDI model\footnote{\url{https://github.com/pfnet-research/TabCSDI/blob/main/config/census_onehot_analog.yaml}}. For batch size, we take the official choice of batch size (8) for the breast dataset (~700 samples) as a base and scale the batch size accordingly with the sample size of our datasets: since most of the datasets we used have the number of samples between 20000 to 40000, we scale the batch size to 256 and use it for all datasets.}
    \item {MCFlow~\citep{mcflow}: we adopt the recommended hyperparameters provided in the official implementation for all datasets\footnote{\url{https://github.com/trevor-richardson/MCFlow/blob/master/main.py}}.}
    \item {For the remaining classical machine learning methods, including EM, GAIN, MICE, Miracle, MissForest, and Softimpute where hyperparameters might be important. Since we use the implementations from the 'hyperimpute' package, we tune the hyperparameters within the hyperparameter space provided in the package\footnote{\url{https://github.com/vanderschaarlab/hyperimpute/tree/main/src/hyperimpute/plugins/imputers/plugin_missforest.py}}. To be specific, we set the maximum budge as 50, then we sample 50 different hyperparameter combinations according to the hyperparameter space. Finally, we report the optimal performance over the 50 trials.}
\end{itemize}

\section{Additional Results}\label{appendix:additional-experiments}

\subsection{MCAR: out-of-sample imputation}\label{appendix:mcar}
We present the out-of-sample imputation performance comparison under MCAR setting in \Cref{tbl:mcar-out-mae-rmse}.
\begin{table}[t] 
    \centering
    \caption{MCAR, Out-of-sample imputation performance on MAE and RMSE metrics (using base $10^{-2}$ for better presentation). \modelname outperforms the most competitive baseline methods by $13.37\%$ on MAE, and by $4.43\%$ on RMSE.} 
    \label{tbl:mcar-out-mae-rmse}
    \small
    \begin{threeparttable}
    {
    \resizebox{\columnwidth}{!}
    {
	\begin{tabular}{llcccccc|c}
            % \toprule[0.8pt]
           & 
           \textbf{Method} & \textbf{California} &  \textbf{Letter} & \textbf{Gesture} & 
           \textbf{Adult} & \textbf{Default} & \textbf{Shoppers} & \textbf{Average}\\
            \midrule[1.2pt] 
            \multirow{6}{*}{\rotatebox{90}{MAE}} & MCFlow & $60.74${\tiny$\pm 5.99$} & $66.35${\tiny$\pm 0.62$} & $82.72${\tiny$\pm 2.43$} & $77.54${\tiny$\pm 0.98$} & $64.12${\tiny$\pm 2.16$} & $73.90${\tiny$\pm 3.34$} & $70.90$\\
            & IGRM & $138.22${\tiny$\pm 4.25$} & $121.59${\tiny$\pm 1.16$} & $149.88${\tiny$\pm 3.92$} & $149.64${\tiny$\pm 0.18$} & OOM & $132.28${\tiny$\pm 0.01$} & - \\
            & GRAPE & $36.54${\tiny$\pm 0.12$} & $42.41${\tiny$\pm 0.19$} & $92.50${\tiny$\pm 0.07$} & $59.25${\tiny$\pm 0.58$} & $57.51${\tiny$\pm 0.09$} & $51.94${\tiny$\pm 1.20$} & $56.69$ \\
            & MOT & $44.20${\tiny$\pm 0.81$} & $46.75${\tiny$\pm 0.13$} & $36.74${\tiny$\pm 0.01$} & $51.17${\tiny$\pm 0.32$} & $31.42${\tiny$\pm 0.50$} & $41.55${\tiny$\pm 0.42$} & $41.97$  \\
            & Remasker & $33.97${\tiny$\pm 0.40$} & $34.03${\tiny$\pm 0.33$} & $35.29${\tiny$\pm 4.13$} & $49.27${\tiny$\pm 0.53$} & $37.02${\tiny$\pm 3.38$} & $41.59${\tiny$\pm 1.39$} & $38.53$ \\
            \cmidrule{2-9}
            & \modelname & 
            $\redbf{33.47}$\tiny$\redbf{\pm0.25}$ & $\redbf{30.69}$\tiny$\redbf{\pm0.61}$ & $\redbf{28.53}$\tiny$\redbf{\pm0.18}$ & $\redbf{49.14}$\tiny$\redbf{\pm0.62}$ & $\redbf{24.63}$\tiny$\redbf{\pm0.24}$ & $\redbf{34.46}$\tiny$\redbf{\pm0.48}$ & $\redbf{33.49}$ \\
            & Improv. & $\redbf{1.47\%}$ & $\redbf{9.81\%}$ & $\redbf{19.16\%}$ & $\redbf{0.26\%}$ & $\redbf{21.61\%}$ & $\redbf{17.06\%}$ & $\redbf{13.09\%}$ \\
            \midrule[1.5pt]
            \multirow{6}{*}{\rotatebox{90}{RMSE}} & MCFlow & $84.57${\tiny$\pm 7.57$} & $87.73${\tiny$\pm 0.58$} & $119.73${\tiny$\pm 1.10$} & $120.68${\tiny$\pm 0.66$} & $101.33${\tiny$\pm 1.82$} & $115.75${\tiny$\pm 0.82$} & $104.97$ \\
            & IGRM & $170.00${\tiny$\pm 4.72$} & $151.16${\tiny$\pm 2.05$} & $172.25${\tiny$\pm 2.99$} & $179.17${\tiny$\pm 0.88$} & OOM & $174.42${\tiny$\pm 1.73$} & - \\
            & GRAPE & $56.05${\tiny$\pm 0.03$} & $58.11${\tiny$\pm 0.29$} & $120.31${\tiny$\pm 0.11$} & $99.19${\tiny$\pm 0.50$} & $91.94${\tiny$\pm 0.42$} & $92.37${\tiny$\pm 0.41$} & $86.33$ \\
            & MOT  & $71.02${\tiny$\pm 1.19$} & $64.69${\tiny$\pm 0.44$} & $71.64${\tiny$\pm 0.84$} & $94.67${\tiny$\pm 1.50$} & $71.35${\tiny$\pm 1.08$} & $82.63${\tiny$\pm 2.19$} & $76$ \\
            & Remasker & $\textbf{52.96}${\tiny$\pm 0.09$} & $48.60${\tiny$\pm 0.45$} & $68.78${\tiny$\pm 3.60$} & $\textbf{90.38}${\tiny$\pm 1.41$} & $76.94${\tiny$\pm 1.72$} & $80.00${\tiny$\pm 0.88$} & $69.61$\\
            \cmidrule{2-9}
            & \modelname & $54.18$\tiny${\pm0.25}$ & $\redbf{47.74}$\tiny$\redbf{\pm0.39}$ & $\redbf{60.90}$\tiny$\redbf{\pm0.52}$ & 
            $91.31$\tiny${\pm0.61}$ & 
            $\redbf{70.50}$\tiny$\redbf{\pm0.83}$ & $\redbf{74.01}$\tiny$\redbf{\pm0.40}$ & $\redbf{66.41}$ \\
            & Improv. & - & $\redbf{1.77\%}$ & $\redbf{11.46\%}$ & - & $\redbf{1.19\%}$ & $\redbf{7.49\%}$ & $\redbf{4.60\%}$ \\
		\end{tabular}
              }
              }
	\end{threeparttable}

\end{table}

\subsection{Missing at Random (MAR)}\label{appendix:mar}
We present the performance comparison under MAR setting in \Cref{tbl:mar-in-sample-mae} and \Cref{tbl:mar-in-sample-rmse}.
\begin{table}[h] 
    \centering
    \caption{MAR, In-sample imputation MAE.} 
    \label{tbl:mar-in-sample-mae}
    \small
    \begin{threeparttable}
    {
    \resizebox{\columnwidth}{!}
    % \scalebox{0.9}
    {
	\begin{tabular}{llcccccccccc}
            % \toprule[0.8pt]
            & \textbf{Method} &  \textbf{California} & \textbf{Letter} &\textbf{Gesture } & \textbf{Magic} & \textbf{Bean} & \textbf{Adult} & \textbf{Default} &   \textbf{Shoppers} & \textbf{News}   \\
            \midrule[1.2pt] 
            \multicolumn{2}{l}{\hspace{-.5em} \textit{Traditional iterative}} \\
            & EM & $39.17$ & $58.05$ & $35.43$ & $48.13$ & $15.41$ & $62.54$ & $33.78$ & $43.01$ & $39.52$  \\
            & MICE & $58.56$ & $82.32$ & $65.14$ & $72.81$ & $22.17$ & $98.44$ & $63.92$ & $75.70$ & $61.12$  \\
            & MIRACLE     &                      $47.13$ &                      $71.10$ &                      $62.56$ &                      $53.87$ &                      $29.06$ &                      $73.06$ &                      $43.72$ &                    $56.89$ &                      $38.21$  \\
            & SoftImpute  &                      $59.47$ &                      $67.01$ &                      $53.95$ &                      $58.55$ &                      $31.55$ &                      $70.39$ &            $42.66$  &                  $61.91$ &                      $64.85$ \\
            & MissForest  &                      $45.90$ &                      $62.51$ &                      $37.69$ &                      $47.70$ &                      $29.21$ &                      $73.25$ &                      $38.09$ &                      $41.97$ &                      $41.18$  \\
            \midrule
            \multicolumn{2}{l}{\hspace{-.5em} \textit{Dist. match}} \\
            & MOT         &                      $44.52$ &                      $50.86$ &                      $44.38$ &                      $46.95$ &                      $30.41$ &                      $58.00$ &                      $36.48$  &                     $40.53$ &                      $49.92$  \\
            & TDM         &                      $39.27$ &                      $46.86$ &                      $42.69$ &                      $45.31$ &                      $29.11$ &                      $59.59$ &                      $37.93$  &                     $45.00$ &                      $63.25$ \\
            \midrule
            \multicolumn{2}{l}{\hspace{-.5em} \textit{GNN}} \\
            & GRAPE       &                      $35.14$ &                      $42.45$ &                      $38.08$ &                      $39.87$ &                      $18.81$ &                      $60.73$  &                      $48.43$ &                       $53.73$ &                      $50.85$\\
            & IGRM        &                      $35.23$ &                      $41.31$ &                      $36.45$ &                      $40.09$ &                      $18.97$ &                      $61.35$ &                      $-$ &                      $59.86$ &                      $-$  \\
            \midrule
            % \multicolumn{2}{l}{\hspace{-.5em}
            % \textit{Predictive}} \\
            % & Remasker~\citep{remasker} & $69.18${\tiny$\pm 0.08$} & $76.88${\tiny$\pm 0.62$} & $57.86${\tiny$\pm 0.09$} & $44.33${\tiny$\pm 0.08$} & $\textbf{27.32}${\tiny$\pm 0.16$} & $51.60$ \\
            % \midrule
            \multicolumn{2}{l}{\hspace{-.5em} \textit{Generative models}} \\
            & MIWAE       &                      $76.27$ &                      $72.30$ &                      $57.30$ &                      $77.31$ &                      $62.64$ &                      $57.99$ &                      $51.10$ &                      $46.74$ &                      $63.26$  \\
            & GAIN        &                      $69.68$ &                      $67.50$ &                      $73.70$ &                      $66.11$ &                      $53.46$ &                      $92.81$ &                      $60.04$   &                      $46.69$ &                      $57.49$ \\
            & MCFlow      &                      $60.00$ &                      $68.23$ &                      $65.59$ &                      $55.27$ &                      $30.40$ &                      $84.36$ &                      $67.87$  
 &                      $72.99$ &                      $56.36$\\    
            & TabCSDI     &                      $83.17$ &                      $77.42$ &                      $54.64$ &                      $76.93$ &                      $75.34$ &                      $57.11$ &                      $46.48$  &                      $58.53$ &                      $66.41$ \\
            & MissDiff    &                      $119.49$ &                      $56.69$ &                      $229.19$ &                      $55.82$ &                      $45.62$ &                      $55.01$ &                      $50.62$  &                      $48.19$ &                      $54.41$ \\
            \midrule
            \multicolumn{2}{l}{\hspace{-.5em}
            \textit{Other}} \\
            & KNN         &                      $49.25$ &                      $49.41$ &                      $45.52$ &                      $49.70$ &                      $30.81$ &                      $50.83$ &                      $35.15$  &                      $46.81$ &                      $48.53$ \\
            & ReMasker    &                      $34.26$ &                      $33.93$ &                      $36.52$ &                      $50.21$ &                      $16.85$ &                      $52.60$  &                      $40.58$  &                      $38.24$ &                      $30.37$ \\
            & HyperImpute &                      $36.22$ &                      $39.74$ &                      $36.44$ &                      $42.23$ &                      $16.93$ &                      $49.91$ &                      $29.66$ 
 &              $36.93$ &                      $26.85$ \\
            \cmidrule{2-11}
            & \modelname & $32.66$ & $32.22$ & $30.39$ & $29.66$ & $15.92$ & $49.51$ & $26.61$  & $35.19$ & $28.22$ \\
		\end{tabular}
              }
              }

	\end{threeparttable}

\end{table}

\begin{table}[h] 
    \centering
    \caption{MAR, In-sample imputation RMSE.} 
    \label{tbl:mar-in-sample-rmse}
    \small
    \begin{threeparttable}
    {
    \resizebox{\columnwidth}{!}
    % \scalebox{0.9}
    {
	\begin{tabular}{llcccccccccc}
            % \toprule[0.8pt]
            & \textbf{Method} &  \textbf{California} & \textbf{Letter} &\textbf{Gesture } & \textbf{Magic} & \textbf{Bean} & \textbf{Adult} &  \textbf{Default} & \textbf{Shoppers} & \textbf{News}  \\
            \midrule[1.2pt] 
            \multicolumn{2}{l}{\hspace{-.5em} \textit{Traditional iterative}} \\
            & EM          &                      $60.28$ &                      $77.77$ &                      $67.68$ &                      $74.17$ &                      $39.37$ &                      $88.74$ &                      $79.48$ &                      $65.76$ &                      $82.32$ \\
            & MICE        &                      $85.13$ &                      $106.80$ &                      $92.85$ &                      $102.74$ &                      $49.96$ &                      $126.50$ &                      $109.18$ &                      $89.10$ &                      $105.35$ \\
            & MIRACLE     &                      $105.54$ &                      $96.64$ &                      $104.10$ &                      $105.48$ &                      $100.19$ &                      $102.18$ &                      $107.70$ &                      $105.52$ &                      $105.53$  \\
            & SoftImpute  &                      $83.23$ &                      $88.33$ &                      $99.13$ &                      $87.11$ &                      $56.32$ &                      $95.44$ &                      $100.26$ &                      $94.04$ &                      $90.61$ \\
            & MissForest  &                      $72.08$ &                      $84.20$ &                      $75.70$ &                      $74.75$ &                      $51.24$ &                      $106.97$ &                      $83.22$ &                      $70.15$ &                      $93.03$  \\
            \midrule
            \multicolumn{2}{l}{\hspace{-.5em} \textit{Dist. match}} \\
            & MOT         &                      $74.86$ &                      $69.71$ &                      $86.57$ &                      $76.53$ &                      $57.92$ &                      $85.96$ &                      $88.74$ &                      $81.01$ &                      $92.78$ \\
            & TDM         &                      $67.88$ &                      $64.66$ &                      $80.44$ &                      $80.77$ &                      $62.96$ &                      $86.98$ &                      $91.34$ &                      $95.22$ &                      $95.02$ \\
            \midrule
            \multicolumn{2}{l}{\hspace{-.5em} \textit{GNN}} \\
            & GRAPE       &                      $55.73$ &                      $57.71$ &                      $72.88$ &                      $66.88$ &                      $42.33$ &                      $86.67$ &                      $96.19$ &                      $80.85$ &                      $95.50$ \\
            & IGRM        &                      $56.13$ &                      $56.44$ &                      $69.51$ &                      $67.45$ &                      $41.61$ &                      $85.55$ &                      $97.30$ &                      $0.00$ &                      $0.00$ \\
            \midrule
            % \multicolumn{2}{l}{\hspace{-.5em}
            % \textit{Predictive}} \\
            %& Remasker~\citep{remasker} & $69.18${\tiny$\pm 0.08$} & $76.88${\tiny$\pm 0.62$} & $57.86${\tiny$\pm 0.09$} & $44.33${\tiny$\pm 0.08$} & $\textbf{27.32}${\tiny$\pm 0.16$} & $51.60$ \\
            % \midrule
            \multicolumn{2}{l}{\hspace{-.5em} \textit{Generative models}} \\
            & MIWAE       &                      $105.54$ &                      $96.64$ &                      $104.10$ &                      $105.48$ &                      $100.19$ &                      $102.18$ &                      $107.70$ &                      $105.52$ &                      $105.53$  \\
            & GAIN        &                      $98.01$ &                      $88.89$ &                      $112.56$ &                      $93.54$ &                      $79.02$ &                      $123.26$ &                      $103.13$ &                      $90.26$ &                      $117.09$ \\
            & MCFlow      &                      $82.84$ &                      $90.07$ &                      $109.49$ &                      $89.02$ &                      $52.83$ &                      $112.48$ &                      $117.18$ &                      $86.55$ &                      $112.21$ \\      
            & TabCSDI     &                      $111.64$ &                      $100.27$ &                      $101.72$ &                      $101.65$ &                      $101.15$ &                      $87.37$ &                      $101.74$ &                      $100.07$ &                      $104.73$ \\
            & MissDiff    &                      $256.09$ &                      $79.60$ &                      $1183.01$ &                      $90.38$ &                      $97.26$ &                      $98.17$ &                      $121.62$ &                      $676.10$ &                      $1464.76$ \\
            \midrule
            \multicolumn{2}{l}{\hspace{-.5em}
            \textit{Other}} \\
            & KNN         &                      $80.05$ &                      $68.59$ &                      $92.64$ &                      $75.52$ &                      $62.92$ &                      $90.24$ &                      $85.50$ &                      $74.04$ &                      $89.12$  \\
            & ReMasker    &                      $56.08$ &                      $47.97$ &                      $70.55$ &                      $77.09$ &                      $36.87$ &                      $78.08$ &                      $78.43$ &                      $58.94$ &                      $93.74$  \\
            & HyperImpute &                      $61.94$ &                      $57.50$ &                      $71.98$ &                      $69.71$ &                      $39.01$ &                      $88.73$ &                      $79.84$ &                      $58.96$ &                      $71.46$  \\
            \cmidrule{2-11}
            & \modelname & $57.12$ & $48.12$ & $68.59$ & $67.29$ & $35.19$ & $79.42$ & $77.52$ & $60.11$ & $65.53$  \\
		\end{tabular}
              }
              }

	\end{threeparttable}

\end{table}
\subsection{Missing Not at Random (MNAR)}\label{appendix:mnar}

We present the performance comparison under MNAR setting in \Cref{tbl:mnar-in-sample-mae} and \Cref{tbl:mnar-in-sample-rmse}.
\begin{table}[h] 
    \centering
    \caption{MNAR, In-sample imputation MAE.} 
    \label{tbl:mnar-in-sample-mae}
    \small
    \begin{threeparttable}
    {
    \resizebox{\columnwidth}{!}
    % \scalebox{0.9}
    {
	\begin{tabular}{llcccccccccc}
  % \toprule[0.8pt]
  & \textbf{Method} &  \textbf{California} & \textbf{Letter} &\textbf{Gesture } & \textbf{Magic} & \textbf{Bean} & \textbf{Adult} & \textbf{Default} &  \textbf{Shoppers} & \textbf{News} \\
  \midrule[1.2pt] 
  \multicolumn{2}{l}{\hspace{-.5em} \textit{Traditional iterative}} \\
  & EM   &   $38.71$ &    $56.10$ &   $37.49$ & $50.75$ &  $10.48$ &  $56.17$ &  $30.06$   &  $42.57$ & $38.93$ \\
  & MICE   & $57.63$ & $81.26$ & $66.33$ &  $77.06$ &  $17.12$ &  $98.34$ &     $59.55$  &  $76.30$ &  $62.27$ \\
  & MIRACLE     &  $42.42$ & $71.68$ &  $71.83$ &  $45.59$ & $15.27$ &   $59.32$ &  $37.41$  & $78.56$ &   $41.05$ \\
  & SoftImpute  &  $64.13$ &  $65.58$ &  $55.31$ &  $60.76$ &   $28.86$ &  $61.11$  &  $43.39$ & $60.12$ &   $63.48$ \\
  & MissForest  &  $43.11$ & $58.61$ &  $39.74$ &  $51.15$ & $25.37$ & $57.40$ &  $34.49$ & $42.58$ &   $39.57$ \\
  \midrule
  \multicolumn{2}{l}{\hspace{-.5em} \textit{Dist. match}} \\
  & MOT & $42.48$ &  $47.70$ &  $44.78$ & $48.05$ & $25.21$ & $49.39$ &  $35.84$  &  $42.20$ &  $48.14$ \\
  & TDM  &   $35.93$ &  $45.22$ &  $43.43$ &  $46.24$ &  $20.91$ &  $50.29$ &  $41.07$ &  $43.48$ &  $53.17$ \\
  \midrule
  \multicolumn{2}{l}{\hspace{-.5em} \textit{GNN}} \\
  & GRAPE  &  $35.82$ &  $41.00$ &  $40.46$ &  $40.68$ &  $13.24$ &  $57.97$  &  $44.70$ &  $55.06$ &  $49.36$\\
  & IGRM   &  $34.89$ &  $40.18$ &  $40.82$ &  $40.92$ &  $13.57$ &  $54.52$ &  $0.00$  &  $58.14$ &  $0.00$ \\
  \midrule
  % \multicolumn{2}{l}{\hspace{-.5em}
  % \textit{Predictive}} \\
  % & Remasker~\citep{remasker} & $69.18${\tiny$\pm 0.08$} & $76.88${\tiny$\pm 0.62$} & $57.86${\tiny$\pm 0.09$} & $44.33${\tiny$\pm 0.08$} & $\textbf{27.32}${\tiny$\pm 0.16$} & $51.60$ \\
  % \midrule
  \multicolumn{2}{l}{\hspace{-.5em} \textit{Generative models}} \\
  & MIWAE  &  $75.91$ &  $83.18$ &  $60.93$ &  $79.34$ &  $62.09$ &  $59.03$  &  $56.06$ &  $49.34$ &  $64.40$  \\
  & GAIN   &  $79.60$ &  $67.04$ &  $83.73$ &  $64.75$ &  $39.68$ &  $123.87$  &  $85.44$ &  $55.89$ &  $57.33$ \\
  & MCFlow &  $59.23$ &  $65.53$ &  $67.03$ &  $60.37$ &  $26.30$ &  $81.16$ &  $67.03$   &  $77.87$ &  $54.96$  \\     
  & TabCSDI     &  $72.26$ &  $77.71$ &  $58.16$ &  $75.46$ &  $75.69$ &  $62.09$  &  $58.88$  &  $60.29$ &  $65.88$  \\
  & MissDiff    &  $52.37$ &  $49.11$ &  $325.12$ &  $113.14$ &  $38.52$ &  $98.34$ & $63.11$   &  $168.17$ &  $75.23$ \\
  \midrule
  \multicolumn{2}{l}{\hspace{-.5em}
  \textit{Other}} \\
  & KNN    &  $54.23$ &  $53.21$ &  $48.79$ &  $49.26$ &  $24.00$ &  $62.99$ &  $40.17$  &  $46.27$ &  $47.83$  \\
  & ReMasker    &  $33.22$ &  $33.65$ &  $38.98$ &  $75.05$ &  $12.94$ &  $47.66$  &  $39.44$ &  $39.89$ &  $29.24$  \\
  & HyperImpute &  $33.96$ &  $40.20$ &  $38.30$ &  $42.88$ &  $11.96$ &  $50.19$  &  $28.36$ 
 &  $39.80$ &  $31.74$  \\
  \cmidrule{2-11}
  & \modelname & $34.08$ & $33.13$ & $31.54$ & $40.82$ & $11.42$ & $48.59$ & $26.94$  & $37.25$ & $28.51$ \\
		\end{tabular}
    }
    }

	\end{threeparttable}

\end{table}

\begin{table}[h] 
    \centering
    \caption{MNAR, In-sample imputation RMSE.} 
    \label{tbl:mnar-in-sample-rmse}
    \small
    \begin{threeparttable}
    {
    \resizebox{\columnwidth}{!}
    % \scalebox{0.9}
    {
	\begin{tabular}{llcccccccccc}
  % \toprule[0.8pt]
  & \textbf{Method} &  \textbf{California} & \textbf{Letter} &\textbf{Gesture } & \textbf{Magic} & \textbf{Bean} & \textbf{Adult} &  \textbf{Default} & \textbf{Shoppers} & \textbf{News} \\
  \midrule[1.2pt] 
  \multicolumn{2}{l}{\hspace{-.5em} \textit{Traditional iterative}} \\
  & EM     &  $59.75$ &  $75.78$ &  $74.88$ &  $76.01$ &  $28.90$ &  $92.30$ &  $83.26$ &  $66.16$ &  $71.03$ \\
  & MICE   &  $84.36$ &  $105.91$ &  $98.00$ &  $106.59$ &  $41.91$ &  $132.01$ &  $112.55$ &  $91.16$ &  $96.87$ \\
  & MIRACLE     &  $73.98$ &  $109.85$ &  $125.48$ &  $74.93$ &  $50.36$ &  $106.10$ &  $150.95$ &  $82.04$ &  $100.59$ \\
  & SoftImpute  &  $102.74$ &  $87.02$ &  $103.04$ &  $90.21$ &  $48.27$ &  $96.65$ &  $99.56$ &  $95.59$ &  $83.84$ \\
  & MissForest  &  $65.80$ &  $80.53$ &  $81.17$ &  $76.66$ &  $42.72$ &  $99.58$ &  $91.87$ &  $70.15$ &  $85.34$ \\
  \midrule
  \multicolumn{2}{l}{\hspace{-.5em} \textit{Dist. match}} \\
  & MOT    &  $70.10$ &  $66.95$ &  $88.63$ &  $76.35$ &  $49.43$ &  $91.16$ &  $93.92$ &  $83.08$ &  $86.44$ \\
  & TDM    &  $59.81$ &  $63.17$ &  $86.36$ &  $78.16$ &  $47.59$ &  $91.81$ &  $96.36$ &  $88.88$ &  $94.51$ \\
  \midrule
  \multicolumn{2}{l}{\hspace{-.5em} \textit{GNN}} \\
  & GRAPE  &  $55.78$ &  $56.43$ &  $78.33$ &  $63.91$ &  $32.66$ &  $94.56$ &  $99.99$ &  $81.17$ &  $84.97$ \\
  & IGRM   &  $55.34$ &  $55.32$ &  $78.74$ &  $64.30$ &  $32.57$ &  $93.64$ &  $97.58$ &  $-$ &  $-$ \\
  \midrule
  % \multicolumn{2}{l}{\hspace{-.5em}
  % \textit{Predictive}} \\
  % & Remasker~\citep{remasker} & $69.18${\tiny$\pm 0.08$} & $76.88${\tiny$\pm 0.62$} & $57.86${\tiny$\pm 0.09$} & $44.33${\tiny$\pm 0.08$} & $\textbf{27.32}${\tiny$\pm 0.16$} & $51.60$ \\
  % \midrule
  \multicolumn{2}{l}{\hspace{-.5em} \textit{Generative models}} \\
  & MIWAE  &  $102.77$ &  $107.54$ &  $111.03$ &  $107.10$ &  $96.50$ &  $102.70$ &  $113.33$ &  $107.99$ &  $114.09$ \\
  & GAIN   &  $106.38$ &  $88.99$ &  $128.97$ &  $89.24$ &  $58.92$ &  $184.39$ &  $116.56$ &  $93.92$ &  $145.92$ \\
  & MCFlow &  $82.16$ &  $86.87$ &  $114.77$ &  $91.89$ &  $47.98$ &  $117.01$ &  $123.68$ &  $88.87$ &  $105.53$ \\     
  & TabCSDI     &  $93.59$ &  $101.11$ &  $107.65$ &  $100.20$ &  $101.99$ &  $98.54$ &  $106.69$ &  $102.92$ &  $107.07$ \\
  & MissDiff    &  $137.09$ &  $68.98$ &  $6116.33$ &  $227.21$ &  $61.13$ &  $229.50$ &  $893.48$ &  $449.46$ &  $829.78$ \\
  \midrule
  \multicolumn{2}{l}{\hspace{-.5em}
  \textit{Other}} \\
  & KNN    &  $83.15$ &  $73.01$ &  $101.66$ &  $75.44$ &  $53.33$ &  $103.03$ &  $86.35$ &  $85.52$ &  $98.56$ \\
  & ReMasker    &  $54.34$ &  $48.02$ &  $75.66$ &  $102.23$ &  $32.15$ &  $85.60$ &  $82.65$ &  $58.61$ &  $80.17$  \\
  & HyperImpute &  $58.93$ &  $57.84$ &  $75.84$ &  $69.87$ &  $31.24$ &  $89.04$ &  $86.96$ &  $61.48$ &  $76.42$ \\

  \cmidrule{2-11}
  & \modelname & $57.48$ & $48.54$ & $69.54$ & $67.66$ & $32.59$ & $85.26$ & $78.82$ & $59.83$ & $66.74$  \\
		\end{tabular}
    }
    }

	\end{threeparttable}

\end{table}

\end{document}